\def\th@plain{%
	\thm@notefont{}% same as heading font
	\itshape % body font
}
\def\th@definition{%
	\thm@notefont{}% same as heading font
	\normalfont % body font
}
\def\endthebibliography{%
	\def\@noitemerr{\@latex@warning{Empty `thebibliography' environment}}%
	\endlist
}
\DeclareMathOperator*{\argmin}{arg\,min}
\DeclareMathOperator*{\argmax}{arg\,max}
\newcommand{\rom}[1]{\uppercase\expandafter{\romannumeral #1\relax}}
\DeclarePairedDelimiterX{\norm}[1]{\lVert}{\rVert}{#1}
\DeclarePairedDelimiterX{\bnorm}[1]{\biggl\lVert}{\biggr\rVert}{#1}
\DeclarePairedDelimiterX{\abs}[1]{\lvert}{\rvert}{#1}
\renewcommand{\emph}[1]{{\textit{#1}}}
\newtheorem{definition}{Definition}
\theoremstyle{definition}
\newtheorem{theorem}{Theorem}%\newtheorem{theorem}{Theorem}[section]
\newtheorem{assumption}{Assumption}
\newtheorem{proposition}{Proposition}
\def\R{\mathbb{R}}
\def\P{{ \mathrm{pr} }}
\def\v{{\varepsilon}}
\def\E{\mathbb{E}}
\def\X{\mathcal{X}}
\def\Y{\mathcal{Y}}
\def\P{\mathbb{P}}
\def\de{\overset{\Delta}{=}}
\def\de{\overset{\Delta}{=}}
\def\v{\varepsilon}
\def\P{\mathcal{P}}
\def\X{\mathcal{X}}
\def\Y{\mathcal{Y}}
\def\E{\mathbb{E}}
\newcounter{savesection}
\newcounter{apdxsection}
\renewcommand\appendix{\par
  \setcounter{savesection}{\value{section}}%
  \setcounter{section}{\value{apdxsection}}%
  \setcounter{subsection}{0}%
  \gdef\thesection{\@Alph\c@section}}
\newcommand\unappendix{\par
  \setcounter{apdxsection}{\value{section}}%
  \setcounter{section}{\value{savesection}}%
  \setcounter{subsection}{0}%
  \gdef\thesection{\@arabic\c@section}}
\def\bbx{\mathbf{x}}
\def\fs{f_S} % Server's function
\def\fa{f_A} % Attacker's function
\def\bbx{X} % aggregated input
\def\bby{Y} % aggregated output
\def\testx{x} %{\underline{\mathbf{x}}} % test input
\def\testy{y} %{\underline{y}} % test input
\def\ind{\mathbf{1}}
\def\sgn{\textrm{sgn}} %sign 
\def\loss{\mathcal{L}} %unified loss, originally big L
\def\ty{\tilde{y}} %returned y 
\def\testx{x}
\def\BibTeX{{\rm B\kern-.05em{\sc i\kern-.025em b}\kern-.08em
    T\kern-.1667em\lower.7ex\hbox{E}\kern-.125emX}}
\title{A Framework for Understanding Model Extraction Attack and Defense}
\name{Xun Xian$^{\star}$ \qquad Mingyi Hong$^{\star}$ \qquad Jie Ding$^{\dagger}$\thanks{This paper is based upon work supported by the National Science Foundation under grant number DMS-2134148.}}
\address{$^{\star}$ Department of Electrical and Computer Engineering, University of Minnesota \\
			    $^{\dagger}$ School of Statistics, University of Minnesota}
\begin{document}
\onecolumn
%\ninept
%
\maketitle
\begin{abstract}
The privacy of machine learning models has become a significant concern in many emerging Machine-Learning-as-a-Service applications, where prediction services based on well-trained models are offered to users via pay-per-query. The lack of a defense mechanism can impose a high risk on the privacy of the server's model since an adversary could efficiently steal the model by querying only a few `good' data points. The interplay between a server's defense and an adversary's attack inevitably leads to an arms race dilemma, as commonly seen in Adversarial Machine Learning. To study the fundamental tradeoffs between model utility from a benign user's view and privacy from an adversary's view, we develop new metrics to quantify such tradeoffs, analyze their theoretical properties, and develop an optimization problem to understand the optimal adversarial attack and defense strategies. The developed concepts and theory match the empirical findings on the `equilibrium' between privacy and utility. In terms of optimization, the key ingredient that enables our results is a unified representation of the attack-defense problem as a min-max bi-level problem. The developed results will be demonstrated by examples and experiments.
\end{abstract}

\section{Introduction}

An emerging concern of contemporary artificial intelligence applications is to protect the `privacy' of machine learning models. Machine learning models are proprietary in that they encompass private information extracted from the underlying data and domain-specific intelligence. For instance, it has been shown that querying spam to fraud detectors could lead to information leak~\cite{lowd2005adversarial,ateniese2015hacking,fredrikson2014privacy,fredrikson2015model}. Additionally, machine learning models are the valuable intellectual property of the providers. If an adversary can efficiently counterfeit the functionality of a well-trained model, the service providers may suffer from the economical loss. For example, in algorithmic trading, what matters the most is the algorithm being deployed instead of the public market data.

The concerns on the privacy of machine learning models are
particularly prominent in cloud-based large-scale learning services, e.g., Machine-Learning-as-a-Service (MLaaS)~\cite{alabbadi2011mobile,ribeiro2015mlaas} and multi-organizational learning~\cite{xian2020assisted}. In MLassS, for example, a cloud server often constructs a private supervised learning model based on its features-labels pairs. It then offers predictive service through an application programming interface (API), where it generates a prediction label for any future features.
Several recent works have shown that an adversary could extract a server's well-trained model through a prediction API~\cite{tramer2016stealing, chandrasekaran2018model, sethi2018data,chandrasekaran2018exploring,shi2017steal,shi2018active,orekondy2019knockoff}, also known as Model Extraction Attack. In these works, an adversary, who may or may not know the model architecture and the distributions of training data, aims to efficiently and effectively reconstruct a model close to the API's actual functionality based on a sequence of query-response pairs. 

Meanwhile, several recent works have developed promising strategies to defend or detect the model extraction attack~\cite{kesarwani2018model,juuti2019prada,orekondy2019prediction,zheng2019bdpl,IL}. 
Oftentimes, the above literature has been developed so that new model stealing attacks were proposed to break previous defenses, and further defenses are proposed to mitigate earlier attacks.
While it is exciting and inspiring to see the back-and-forth development of attack and defend strategies, %as typical in general Adversarial Machine Learning at large, 
there has not been a view that simultaneously considers the simultaneous interactions between attack and defense strategies.
In this work, we will develop such a unified framework to investigate the interplay between the server/defender and the adversary/attacker, where an adversary aims to craft queries sent to the server, and the server aims to identify the most effective way to maneuver the generated response.

%Substantial progress in privacy-preserving machine learning has been made in the past decades. From cryptographic based methods, namely Homomorphic Encryption (HE)~\cite{rivest1978data} and Secure Multi-party 
%Computation (SMC)~\cite{yao1986generate, goldreich1998secure}, to statistical methods, such as Differential Privacy (DP)~\cite{dwork2004privacy,dwork2006our,dwork2011differential,chaudhuri2011differentially}, local (differential) privacy~\cite{warner1965randomized,duchi2013local} and its many variations \cite{xiong2016randomized, dwork2016concentrated, mironov2017renyi,du2012privacy,wang2016relation}, most existing frameworks focus on protecting privacy at \textit{data} level. 
%In data-private machine learning, a general goal is to ensure that raw data information cannot be accurately identified from learning results. % (linked to an individual) or negatively affect data contributors. 
%There is no doubt that data used for machine learning must be well protected. 
%Model privacy is different from data privacy.
%Meanwhile, an emerging number of application scenarios involve a privately learned model, where the general public can only access its functionality through specific portals or interfaces.
%Correspondingly, a new concern in contemporary machine learning is to protect a learned model instead of data. 

%In the context of fairness, a user may decide to provide key statistics to assist others' learning while hiding its sensitive features. 
% After all, the learned models (from data) are often intellectual properties of data analysts.

\vspace{-0.2cm}
\subsection{Contributions}
\vspace{-0.2cm}

Our contributions of this work are summarized below.
\begin{itemize}
\vspace{-0.1cm}
    \item We formulate the notion of adversary-benignity (AB) curve to quantify the trade-offs between a benign user's utility and an adversary's stealing power. The AB curve is a general metric to evaluate the quality of a pair of defense and attack strategies. 
    
    \vspace{-0.1cm}
    \item We formulate a min-max bi-level framework that unifies the views of the user/attacker and server/defender. To the best of the authors' knowledge, this is the first optimization framework that unifies two kinds of strategies in the literature of adversarial model stealing attack. 
    
    \vspace{-0.1cm}
    \item We develop theoretical analyses of the proposed { notions}. Under some conditions, we present a (constructive) defense strategy that delivers information-theoretical guarantees. 
    
    \vspace{-0.1cm}
    \item We develop an operational algorithm to solve the proposed optimization framework. 
    Based on the results obtained from our algorithm, we discuss several practical implications on the model stealing attack and defense. 
    The key ingredient that enables our results is a representation of the problem as min-max bi-level optimization. The optimization formulation may be used in many other adversarial learning situations, so it has its own merit.
    \vspace{-0.1cm}
    
    %, based on a novel way of casting it a bi-level min-max optimization problem.
\end{itemize}

\vspace{-0.1cm}
\subsection{Related Work}
\vspace{-0.2cm}
\textbf{Data privacy} is a related literature that has received extensive attention in recent years due to ethical and societal concerns~\cite{evans2015biometrics,voigt2017eu}.
Data privacy concerns the protection of individual data information from different perspectives, such as cryptography~\cite{yao1982protocols,chaum1988multiparty}, randomized data collection~\cite{evfimievski2003limiting,kasiviswanathan2011can,DingInterval1a}, statistical database query~\cite{dwork2004privacy,dwork2011differential}, membership inference~\cite{shokri2017membership}, and Federated learning~\cite{shokri2015privacy,konevcny2016federated,mcmahan2017communication,DingHeteroFL}.
While the goal of data privacy is to obfuscate individual-level data values, the subject of model privacy focuses on protecting a single learned model ready to deploy. 
For example, we want to privatize a classifier to deploy on the cloud for public use, whether the model is previously trained from raw image data or a data-private procedure. 

\textbf{Model extraction} is another closely related subject~\cite{tramer2016stealing,papernot2016practical}, where a user's goal is to reconstruct a server's model from several query-response pairs, \textit{knowing the model architecture}. For example, suppose that the model is a generalized linear regression with $p$ features. It is easy to be reconstructed using $p$ queries of the expected mean (a known function of $X \beta$) by solving a linear equation system~\cite{tramer2016stealing}.  
When only hard-threshold labels are available, model extraction can be cast as an active learning problem where the goal is to query most efficiently \cite{chandrasekaran2018model}.
 
% Model extraction was also studied in contexts beyond the prediction API, e.g., when an adversary utilizes the gradient information~\cite{milli2019model}.
\textbf{Model defense strategies} have been recently studied from different perspectives. A warning-based method was developed in \cite{juuti2019prada}, where the server continuously test whether the pairwise distances among queried data approximately follow the Gaussian distribution empirically observed from benign queries. 
For classification models that return class probabilities \cite{lee2018defending}, a defense method was developed that maximally perturbs the probabilities under the constraint that the most-likely class label remains the same. 
The work in \cite{orekondy2019prediction} studied a similar setting but from a different view. The main idea was to perturb the probabilities within an $\ell_2$-distance constraint to poison the adversary's gradient signals.
A nonparametric method based on information theory was developed in \cite{IL}.

% The rest of the paper is organized as follow. In section~\ref{formu}, we formally describe the problem in rigorous mathematical definition, and introduce the use of \textit{adversary-benignity} curve. In section~\ref{analysis}, under mild assumptions, we discuss the fundamental relationship between Empirical Minimization Principle and several types of defensive strategies, and provide a constructive way to build a  defense strategy.
% The problem is simplified into a bi-level min-max optimization problem in Section~\ref{sec_opt_algo}, and an algorithm with convergence guarantee is proposed. {\red[if no space, this paragraph can be deleted; but from the intro it is not clear what the challenge is.]}

%The paper is organized as follows. In Section~\ref{sec_back}, we introduce some background concepts and the imitation privacy. In Section~\ref{sec_mlaas}, we explain related work in MLaaS using the imitation privacy framework. In Section~\ref{sec_al}, we elaborate on the imitation privacy in the context of multi-organizational learning, exemplified with different perspectives.

\section{Problem Formulation}\label{formu}
\vspace{-0.1cm}

% \subsection{Notation}

We consider supervised learning, where $x \in \mathcal{X} \subset \R^d$, $y \in \mathcal{Y}$ denote the features and label, respectively.
Depending on the regression or classification learning, $\Y$ can be a subset of $\R$ or a finite set. We will make more specific assumptions later on. 
Let $f: \mathcal{X} \rightarrow \mathbb{R}$ denote the function that represents the supervision. With a slight abuse of notation, a function $f$ will be used to represent a learned model. For example, in regression models, $f(x)=\E(y\mid x)$, and 
in binary classification models, $f(x) = \log(\P(y=1\mid x)/\P(y=0 \mid x))$. 
%In the latter case, the observed response may be $f(x)$, or the hard-threshold label $\sgn(f(x))$. 
%Let $\fs$ and $\fa$ denote the server's model and the adversary's model, respectively. Their precise definitions will be elaborated in Subsections~\ref{subsec_server} and \ref{subsec_adversary}.
We let $\loss$ denote the loss function that evaluates the discrepancy between the true label and predicted label. Examples are quadratic loss, $0-1$ loss, and cross-entropy loss.
Also, let $x_1,\ldots, x_n \in \R^d$ denote the query samples sent from the user to the server. For notational convenience, we will also let $\bbx_n$ denote the concatenation $[x_1,\ldots, x_n]\in \R^{nd}$. 
Let $\ind\{\cdot\}$ denote the $0-1$ indicator function.
Let $\sigma: u \mapsto (1+e^{-u})^{-1}$ denote the softmax function.

% We will mainly use two types of evaluation (loss) funcions in this paper. When both of the comparing arguments are scalar variables, we use $\loss(\cdot, \cdot)$, e.g. $0-1$ loss. One the other hand, if one argument involves probability density, we opt to $\ell(\cdot,\cdot),$ e.g., Cross-Entropy and AUC.
%For a given loss function $\mathcal{L}(\cdot, \cdot)$, the risk and the minimizer are $\mathbb{E}_\bbx \mathcal{L}(f(\bbx), y)$ and $\fs = \argmin_{f \in \mathcal{F}}\mathbb{E}_\bbx \mathcal{L}(f(\bbx), y)$ respectively.
% For the two-agent scenario, the sub/sup-scripts (1) and (2) are replaced with (a) and (b), respectively.

% We use calligraphic upper case letters for sets (e.g., a function class $\mathcal{F}$). % bold letters for vectors/metrics (e.g., the $j$-th input $\bbx_j$), and plains letters for scalar variables (e.g., the $k$-th component of $j$-th input, $x^k_j$).

% \begin{table}[tb]
%     \centering
%     \caption{Summary of the notation in the multi-agent scenario. For the two-agent scenario, the sub/sup-scripts (1) and (2) are replaced with (a) and (b), respectively. }
%      \label{tab:notation}
% 	\begin{tabular}{ p{2.5cm}  p{5.5cm} }
%     \hline
%     \textbf{Notation} &  \textbf{Meaning} \\ \hline
%     $\fs$ & server's model \\ \hline
%     $\fa$ & adversary's model \\ \hline
%     $\mathbb I\{\cdot\}$ & Indicator function  (0 or 1) \\ \hline
%     $\bbx$ &  Query samples\\ \hline
%     $n$   & number of labels \\ \hline
%       \end{tabular}
% \end{table}
% {\red[to save space, no need to have the table; notation is not complicated.]}

\subsection{Model Extraction and Defense}\label{med}
% For a data pair $(x,y)$ defined over $\mathcal{X}\times  \mathcal{Y} \subseteq \mathbb{R}^d \times |C|$ with joint probability measure $\mathbb{P}$ (and marginal $\mathbb{P}_{\bbx}$, $\mathbb{P}_{y}$), 

% \textbf{Server's action}.
 %Here, $\mathcal{F}_S$ represents a model architecture that may or may not be known to the \in \mathcal{F}_S
We now elaborate the process of model extraction and defense. For the ease of illustration, we will discuss in the context of binary classification.

\textbf{Server's point of view.} The server has trained a local model, denoted by $\fs: \mathcal{X} \rightarrow \mathbb{R} $. 
Upon receiving a user's querying samples $ \bbx_n \de [x_1, x_2, \ldots, x_n]$, the server will return $Y_n \de [y_1, y_2, \ldots, y_n] = [\fs(x_1), \fs(x_2), ..., \fs(x_n)]$, or its hard-threshold label version $\sgn(\fs(X_n))$ in some black-box model extraction scenarios.

Without knowing whether the user is benign or adversarial, the server will return $\fs^g(X_n)$ instead of $\fs(X_n)$ to the user to enhance model privacy.
The superscript $g$ denotes a defense strategy selected from the strategy set $\mathcal{G}$.
%The set $\mathcal{G}$ can consist of both deterministic and probabilistic functions. 
%For example, suppose that the server's model $\fs$ is a linear function, and $g$ represents a trigonometric function. Then $\fs^g \de \fs \circ g$ is a deterministic function that corresponds to the nonlinear decision boundary. Also, a probabilistic strategy $g$ can be defined by, e.g., flipping the label of $k < n$ samples that are the closest to the decision boundary determined by $f$ in binary classification. 
After selecting a defense strategy, the server will return labels $\widetilde{\bby}^{g}_n \de  [\fs^g(x_1),\ldots, \fs^g(x_n)],$ or its hard-threshold labels $\sgn(\widetilde{\bby}^{g}_n)$. When there is no ambiguity, we simply write $\widetilde{\bby}_n$ without explicitly specifying the defense strategy $g$. 

% Consequently, we use the notation $\fs^g: \mathcal{X} \rightarrow \mathbb{R}$ to represent the defensive or perturbed model as well as the dependence of $\fs$ on a defense strategy $g$. 

%In this case, $\sgn\fs^g(\cdot))$ just means that server will flip the labels of $k$ queries in $[x_1, x_2, ..., x_n]$ that are nearest to its linear decision boundary $\fs$.
% The above return the server will also return probability result (also known as soft label), and our framework is flexible with all types of output. 

% \textbf{Server's utility}.
The only concern for a normal/benign user is whether it can obtain a low in-sample prediction error via the server's machine learning service. Hence, we define the server's utility (for benign users) in the following way.

\begin{definition}[Server's utility]\label{def_ser} %[In-sample error]
Given an evaluation loss $\loss$ that is upper bounded by a positive constant $K$\footnote{The requirement of boundness is merely for technical considerations.}, the in-sample error of $\tilde{\bby}_n$ from $\bby_n$ is $n^{-1}\sum_{i=1}^n \loss(y_i,\tilde{y}_i)$.
The \textit{server's utility for benign users} is
$1 - K^{-1} n^{-1}\sum_{i=1}^n\loss(y_i,\tilde{y}_i).$
\end{definition}
From the definition, a server's utility is within $[0,1]$.
In particular, for the $0-1$ loss, $\loss(y,\tilde{y})\de \ind_{y \neq \tilde{y}}$, the {server's utility} is $1  - n^{-1}\sum_{i=1}^n\loss(y_i,\tilde{y}_i)$.
%Note that the in-sample error is finite since both the arguments are discrete labels. Common examples of $\loss(\cdot,\cdot)$ can be mean square error (only suitable for binary classification), AUC (Area under the ROC Curve) and 0-1 loss. Under the setting of this paper,  black-box model extraction, the loss function is 0-1 loss without further mentioning.

%{\red[the title of the subsection is misleading. we didn't talk about models of adversary.we defined prediction error and adversary's utility. It is not clear how the adversary will threaten the server.]}
% Based on the query responses, it will respond to each of them with the same strategy. Let denote the responses.  and will pick as few as samples $\bbx_n=[x_1, x_2, ..., x_n] \in \mathbb{R}^{n\times d}$ (each $x_i \in \mathcal{X}$) to query the server. Here, each $\bbx_i$ could follow some particular distribution $P_\bbx$, or just be arbitrary numbers as needed (Query Synthetic setting in Active Learning). One particular effective and efficient selection of samples would be those near $\fs$'s decision boundary.

\textbf{Adversary's action}.
The adversarial user will send a set of $n$ queries to the server, which can be deterministic or randomly generated from a distribution $P$.
After obtaining labels $\tilde{\bby}_n$ returned from the server, the adversary will build a machine learning model $ \fa \in \mathcal{F}_A: \mathcal{X} \rightarrow \mathbb{R} $ to mimic the server's model $\fs$. The quality of the adversary's model is evaluated using the out-sample test error, where the test is based on the server's authentic responses, i.e., $\fs(\testx)$ for a test sample $\testx \in \X$.
Suppose that the test data $\testx \in \mathcal{X}$ follow a distribution $Q$, not necessarily the querying samples $X_n$'s distribution. A commonly used $Q$ (both in theory and practice) is the uniform distribution.
We define 
$$
R(\fa,\fs)= \mathbb{E}_{\testx \sim Q}\loss(\fa(\testx),\testy), \quad
\textrm{ where } \testy = \fs(\testx)
$$
as the out-sample prediction error of $\fa$ with respect to the server's model $\fs$.
%In a more general setting, one may consider a different definition of $\testy$ {\red[such as?]}. We do not consider that extension in this work.  {\red[without specifying the specific extension, the second sentence is strange. ``that" is not defined.]}
Recall that the loss is assumed to be no larger than $K$, and consequently, the risk is also bounded by $K$. 
For ease of calculation, we will use the empirical version i.e., $R_n(\fa,\fs) = n^{-1} \sum_{i=1}^n \mathcal{L}(\fa(x_i),\fs(x_i))$. 
% A smaller test error means better performance of the adversary. 
% 
\vspace{0.2cm}
\begin{definition}[Adversary's utility] \label{def_adv} %[Out-sample Performance]
An adversary's utility with respect to a server $\fs$ is defined to be
$$1 - K^{-1} R_n(\fa,\fs) \in [0,1].$$
\end{definition} 

% \begin{remark}
% It seems that in order for the adversary to evaluate the performance, the adversary has to know $\sgn\fs(\testx))$, which is the clean version of server's classifier $\fs$. This is reasonable since in reality, the quality of the extraction is typically evaluated on data with known distribution e.g., uniform. 
% \end{remark}

% The magnitude of the out-sample performance could be infinite for certain types of loss functions e.g., cross-entropy and $\ell_2$-distance. 
% The typical setting  black-box model extraction literature is to use $0-1$ loss, and consequently the adversary's input argument should be switched to hard labels i.e.,  $\sgn\fa(\underline{x}))$. Specifically, by assuming that $\mathcal{F}_S$ (server's model class) equals $\tilde{\mathcal{F}}_S$ (adversary's model class), we denote $\frac{K - R(\fa)}{K - R(\fs)} \in [0,1]$ to be `relative out-sample performance' for $K = \min_{h \in \mathcal{F}_S}R(h) = \min_{h \in \mathcal{F}_S}\mathbb{E}(\testy,h(\testx)).$ 

\subsection{Server's Utility and Adversary's Utility}
\begin{figure*}[!htb]
   \vspace{-0.2cm}
	\centering
	\subfloat[][]{\includegraphics[width=8.5cm, height = 5.3cm]{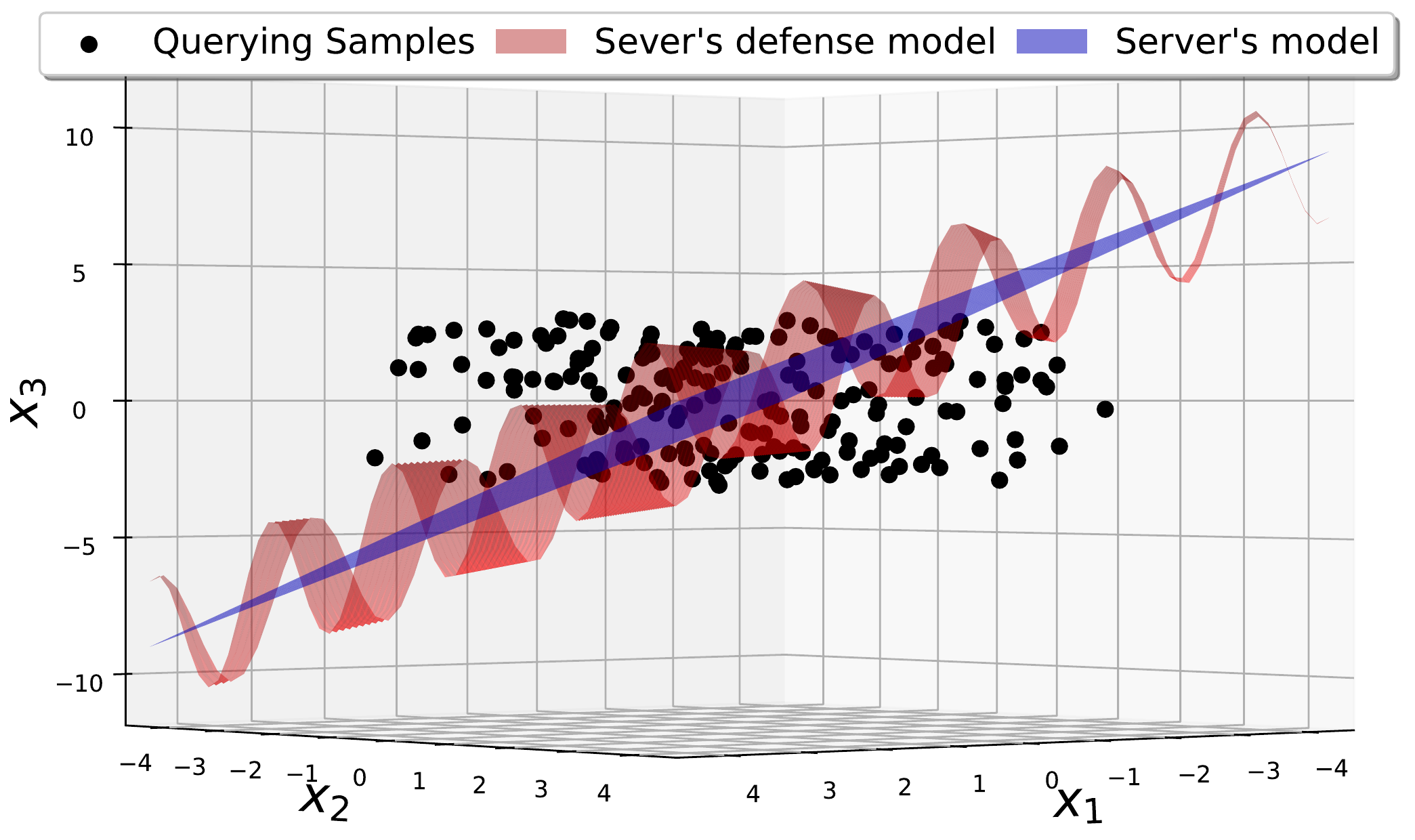}\label{fig_ill_case1}}\hfill
	\subfloat[][]{\includegraphics[width=8.5cm, height = 5.3cm]{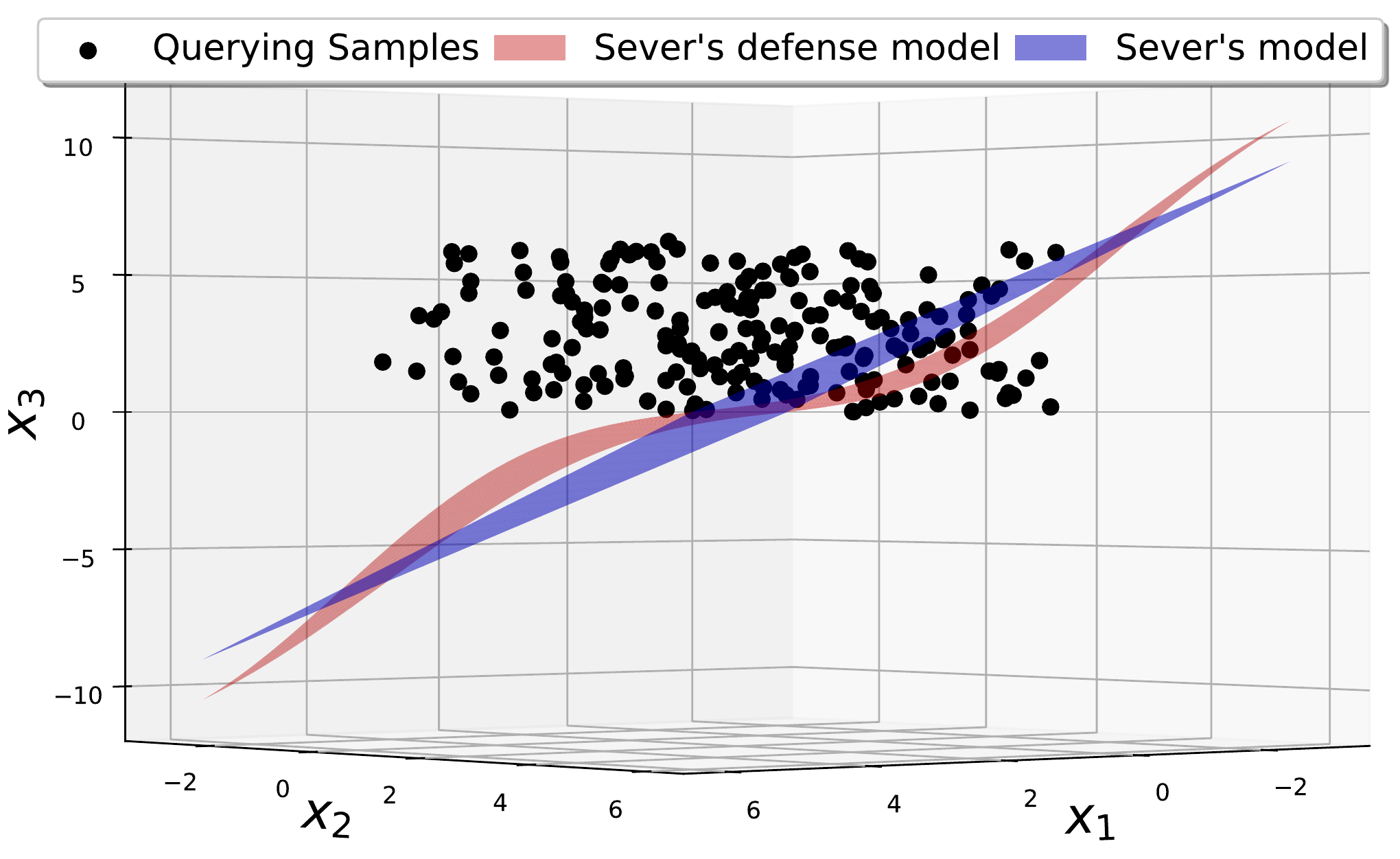}\label{fig_ill_cas2}}
	\vspace{-0.1in}
	\caption{
	Illustrations of the two pairs of attack-defense strategies: (a) server's utility: 86\%, adversary's utility: 92\%; (b) server's utility: 98\%, adversary's utility: 90\% (stand errors are within 0.01 over 50 replications).
	}\label{examples}
	\vspace{-0.3cm}
\end{figure*}
Suppose that the server chooses a defense strategy that severely distorts the returned values, namely the $\fs^g$ and $\fs$ are very different. In that case, it is conceivable that both the server's and adversary's utility will degrade. However, low-quality prediction services are undesirable for a benign user whose only focus is the in-sample prediction performances. 
As a result, the server inevitably faces a tradeoff between the utility it maintains through the APIs prediction service and the loss of its machine learning model's privacy (or the adversary's utility).

%A natural question that arises is what is the relationship between the server's utility and the adversary's utility, e.g., if the server’s utility is an increasing function of the adversary’s utility.

To better understand such tradeoffs, we consider the following illustrative examples.
Suppose that the server has a binary classification model $\fs$, which is a hyper-plane containing the origin in $\mathbb{R}^3$.
Suppose that the defense model is $f \circ g$, where $g:u \mapsto m\cdot\sin(\omega \cdot u)$. We consider two different pairs of attack-defense strategies. 

\textbf{Case 1}. The adversary sends the queries $\bbx$, which is uniformly sampled from the box $[-3,3]^3$ (Fig.~\ref{fig_ill_case1}). It trains a logistic regression model upon the responses from the server. The server uses the strategy with $m= 2 , \omega= 3$. It can then be numerically calculated that the server's utility is $0.86$, and the adversary's utility is $0.92$, based on the 0-1 loss.
The numerical calculation was based on $10^5$ test data, so that the $6\%$ gap is significant. 

\vspace{0.05cm}
\textbf{Case 2}. The adversary sends the queries $\bbx$, which is uniformly sampled from the box $[0,6]^3$ (Fig.~\ref{fig_ill_cas2}). The server uses the strategy with $m= 0.8, \omega=1 $. Other settings are the same as the first case. It can be calculated that the server's utility is $.98$, and the adversary's utility is $0.9$.

The difference between the two cases is the informativeness of both query samples (adversary's choice) and defense strategy (server's selection). Specifically, in the first case, the query samples across the decision boundary are balanced (in labels $\{1,-1\}$). Therefore, they bring much more information than the very unbalanced one in Case 2 for the adversary to extract the server's model. From the server's point of view, its defense model in Case 1 is not efficient since all those perturbed points are distributed symmetrically. On the other hand, the defense mechanism in the second scenario fallaciously increases the labels of the minority class and hence downgrades the adversary's utility more.

The server favors the second case since it degrades the adversary's utility by a greater margin compared to the loss in its utility. In contrast, the adversary favors the first scenario due to a relatively high adversarial utility. To systematically evaluate the quality of each pair of defense-attack strategies, we propose the following metric. 

\vspace{0.05cm}
\begin{definition}[Adversary-Benignity curve]
An Adversary-Benignity (AB) curve is a set of points $(b, a)$ associated with a set of server-adversary strategy pairs, where $b$ and $a$ are the utilities of the server and adversary evaluated on the given pair $(f_A, f_S)$, respectively. 
\end{definition}

Consider a two-dimensional plane where the $x$-axis is the server's utility, and the $y$-axis is the adversary's utility. Then by constructions, all the points inside the unit box with the lower-left corner at the origin are valid components for the AB Curve. For the two cases mentioned above, their corresponding AB curves are two points, namely $(0.86,0.92)$ and $(0.98,0.9)$. 
In general, one may naturally ask the following question: what is the shape of the AB curve of a set of strategies that the server/adversary favors? Intuitively, the server would like a pair of the strategy set with its utility that is located below the $45^{\circ}$ line, because such pair results in a higher server's utility than the adversary's utility. In other words, the server deliveries a better prediction performance with a lower privacy loss.

\vspace{0.3cm}
\section{Probabilistic View: A study of the AB Curve}\label{abc_analysis}

As mentioned above, the $45^{\circ}$ line of the AB curve serves as a standard to check the relative efficiency of a pair of attack and defense.
It is essential to study which kind of defense-attack pair will lead to a $45^{\circ}$ line of the AB Curve, as it corresponds to an equilibrium scenario where server's utility equals to adversary's utility. 
We consider the following setting to simplify the analysis. The server holds a binary classification model. Upon being queried, it will only return hard-threshold labels. The evaluation will be based on the $0-1$ loss function. In other words, $\mathcal{L}(y,\sgn(f(x))) = \ind\{y \neq \sgn(f(x))\}.$
Additionally, we assume that both the server and the adversary have the same set of the model classes, e.g., neural networks.
%We will use the term `model' and `function' interchangeably.

Let us consider the following scenario where the adversary attacks the server by querying i.i.d. samples from a distribution $Q$, and the server applies a probabilistic method to protect its models. In particular, we consider the following probabilistic defense strategy set. Let 

\begin{align}\label{stat2}
  \fs^g(x) =
    \begin{cases}
      \fs(x) & \text{if $x \not\in \mathcal{S}$}\\
       B \cdot \fs(x) & \text{otherwise},
    \end{cases} 
\end{align}
where $\mathcal{S}$ is a subset of the input space $\X$, $B$ is a symmetric Bernoulli random variable taking values from $\{+1,-1\}$, and denote $\ty = \sgn(\fs^g(x))$. 
%In the following discussion, we assume that $B$ takes equal probability on $\{+1,-1\}$, namely $P(B=1) = P(B=-1) = 1/2$.

To see that the above pair of attack-defense results on a $45^{\circ}$ line of the AB curve, we first show that $\fs$ is `not learnable' using the notion of learnability from the learning theory~\cite{valiant1984theory,schapire1990strength}. 
From the adversary's view, the process of querying the server and building a model to mimic the server's behavior can be regarded as a `learning' process in learning theory. %Therefore, our analysis of the AB curve will inevitably be closely tied to some results in learning theory.
%A learner is often presumed to learn a target function $f$ by accessing $x_i$ and  $f(x_i)$, where $x_i$ is independently drawn from a fixed distribution $D$. 
In our context, the adversary will only access the server's \textit{defense oracle} $\fs^g$, which is the defense version of $\fs$. We define the following, which slightly modifies the original notion of learnability.

\begin{definition}[$\v$-learnable via Algorithm $\mathcal{A}$]\label{def_learn}
A function $f \in \mathcal{F}$ is $\v$-learnable ($\v \geq 0 $) via the algorithm $\mathcal{A}$ by a hypothesis class $\mathcal{H}$ if for all $\mu \in (0,1/2),$ $ \delta \in (0,1/2)$, and a distribution $D$ over $X$, given access to the defense oracle $f^g$, the algorithm $\mathcal{A}$ runs in polynomial time in $1/\mu$, $1/\delta$, and dimension of $\mathcal{F}$ to output a $h \in \mathcal{H}$, such that $P_D(f(x) \neq h(x)) < \v + \mu$ with probability at least $1-\delta$.
\end{definition}

In the above definition, a smaller $\v$ corresponds to a more accurate learning of $f$ by $h \in \mathcal{H}$. With $\v = 0$, it reduces to the standard notion of learnability. Regarding the learning algorithm $\mathcal{A}$, it is standard to use the risk minimization principle, namely to solve $\operatorname{min}_{h} \mathcal{L}(f^s_g(x), h(x))$. The risk minimization will be set as the default learning algorithm in the subsequent analysis.

We show that the server's model $\fs$ is not learnable with the pair of attack-defense defined in Eq.~(\ref{stat2}) under the following assumption. Recall that $\mathcal{F}_S$ is the server's model class. 

\begin{assumption}\label{assump1}
For the querying distribution $Q$ over $\X$, the server can find a subset $\mathcal{S} \subset \X$ with $P(x \in \mathcal{S})= 2\v$ for some $\v \in (0,0.5)$, and a $f_B \in \mathcal{F}_S$, such that $\sgn(f_B(x)) \neq \sgn(\fs(x))$ for $x \in \mathcal{S}$, and $\sgn(f_B(x)) = \sgn(\fs(x))$ for $x \not\in \mathcal{S}$. 
\end{assumption}

Intuitively speaking, the assumption requires the existence of a model $f_B \in \mathcal{F}_S$ that behaves similarly but not precisely to the server's model $\fs$.

\begin{theorem}\label{theo1}
% With \textit{Assumption 1} and the $B$ taking equal probability in $\{-1,+1\}$ as defined in Equation(\ref{stat2}), for any $ \gamma \in (0, \v)$,  server's classifier $f_S$ is NOT $\gamma$-learnable by
Under Assumption \ref{assump1}, for any $ \gamma \in (0, \v)$,  server's classifier $f_S$ is NOT $\gamma$-learnable by
the adversary's model class $\mathcal{F}_A$ via risk minimization algorithm.
\end{theorem}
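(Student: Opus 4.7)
The plan is to exploit a hidden symmetry in Assumption~\ref{assump1}: the twin classifier $f_B$ is statistically interchangeable with $f_S$ from the viewpoint of the defense oracle, so the risk minimization (RM) algorithm cannot tell which one it is trying to learn. First, I would introduce a parallel oracle $f_B^g$ by applying the scheme~\eqref{stat2} to $f_B$ with the same subset $\mathcal{S}$ and an independent symmetric Bernoulli sign. A direct case analysis using Assumption~\ref{assump1} then shows that for every $x$, the labels $\sgn(f_S^g(x))$ and $\sgn(f_B^g(x))$ have identical conditional distributions: off $\mathcal{S}$ both coincide with $\sgn(f_S(x)) = \sgn(f_B(x))$, while on $\mathcal{S}$ both are uniform on $\{-1,+1\}$ because $B$ is symmetric and $\sgn(f_S) = -\sgn(f_B)$ there. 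Since queries are i.i.d.\ from $Q$, this lifts to equality of the joint laws of the $n$ returned labels, and hence of the RM output $h$ against either oracle.

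I would then establish the symmetric identity
\[
\Pr\bigl(P_Q(h \neq f_S) < \gamma + \mu\bigr) = \Pr\bigl(P_Q(h \neq f_B) < \gamma + \mu\bigr).
\]
The cleanest route is via the measure-preserving involution $T$ on labeled data that flips $\tilde{y}_i \mapsto -\tilde{y}_i$ whenever $x_i \in \mathcal{S}$: $T$ preserves the joint distribution of $(x_i,\tilde y_i)_i$ (since $\tilde y_i$ is uniform on $\{\pm 1\}$ for $x_i \in \mathcal{S}$), and a short check shows that the RM output on $T(\mathrm{data})$ equals the pointwise flip $\tilde h$ of the original output on $\mathcal{S}$, while $P_Q(\tilde h \neq f_S) = P_Q(h \neq f_B)$ by direct computation using Assumption~\ref{assump1}. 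Combining these facts, the random variables $P_Q(h \neq f_S)$ and $P_Q(h \neq f_B)$ share the same distribution, which is exactly the displayed equality.

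For the finishing step, Assumption~\ref{assump1} yields $P_Q(\sgn f_S \neq \sgn f_B) = P_Q(x \in \mathcal{S}) = 2\varepsilon$, so the triangle inequality on indicator functions gives $P_Q(h \neq f_S) + P_Q(h \neq f_B) \geq 2\varepsilon$ for every $h$. Suppose for contradiction that $f_S$ is $\gamma$-learnable via RM with $\gamma < \varepsilon$; pick $\mu \in (0, \varepsilon - \gamma)$ and $\delta \in (0, 1/2)$. The learnability definition forces $\Pr(P_Q(h \neq f_S) < \gamma + \mu) \geq 1 - \delta$; combining with the symmetry identity and a union bound, with probability at least $1 - 2\delta > 0$ both inequalities $P_Q(h \neq f_S) < \gamma + \mu$ and $P_Q(h \neq f_B) < \gamma + \mu$ would hold for the same $h$, giving $P_Q(h \neq f_S) + P_Q(h \neq f_B) < 2(\gamma + \mu) < 2\varepsilon$ and contradicting the triangle inequality.

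The main obstacle will be rigorously justifying the symmetry identity in the middle step without invoking a $\gamma$-learnability hypothesis for $f_B$, which would be circular. The involution $T$ is the way around this, since it lets us compare $P_Q(h \neq f_S)$ and $P_Q(h \neq f_B)$ as two functionals of a single RM output distribution using only the invariance of Assumption~\ref{assump1} under the swap $f_S \leftrightarrow f_B$. The one lingering subtlety is to ensure that the sign-flip $h \mapsto \tilde h$ stays inside $\mathcal{F}_A = \mathcal{F}_S$; this follows from Assumption~\ref{assump1} applied to the relevant candidates $f_S$ and $f_B$, or can be absorbed by working directly with the empirical risks of $f_S$ and $f_B$ in place of the full ERM optimum.
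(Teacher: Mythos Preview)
Your argument is correct in spirit and considerably more careful than the paper's own proof, but it follows a genuinely different route. The paper's proof is very short: it simply computes the \emph{population} $0$--$1$ risk of both $f_S$ and $f_B$ against the noisy oracle labels $\tilde y$ and observes that each equals $\varepsilon$, so that risk minimization cannot prefer $f_S$ over $f_B$; since $P_Q(\sgn f_S \neq \sgn f_B) = 2\varepsilon$, the adversary cannot be guaranteed to land within $\gamma + \mu < 2\varepsilon$ of $f_S$. There is no involution, no distributional coupling, and no union-bound step---the paper treats ``RM cannot distinguish $f_S$ from $f_B$'' as an informal but sufficient conclusion.

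Your approach buys rigor that the paper lacks: you work with the actual finite-sample output $h$ of the algorithm and establish the exact distributional identity $\Pr(P_Q(h\neq f_S)<t)=\Pr(P_Q(h\neq f_B)<t)$, which together with the triangle inequality and a union bound yields a clean contradiction against the $1-\delta$ success guarantee in Definition~\ref{def_learn}. The price is the closure issue you flag---the involution step needs $\tilde h \in \mathcal{F}_A$, which Assumption~\ref{assump1} does not grant for arbitrary $h$. Your proposed fallback (comparing the empirical risks of $f_S$ and $f_B$ directly rather than the full ERM optimum) is essentially the paper's argument, and is the cleanest way to close the gap: since the label is a symmetric coin flip on $\mathcal{S}$, the empirical risks of $f_S$ and $f_B$ are exchangeable random variables, so any deterministic RM tie-breaking rule outputs $f_B$ with probability at least $1/2$ on the event that both are empirical minimizers, which already violates the $1-\delta$ requirement for $\delta<1/2$.
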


Recall the server's utility for its true model $\fs$ is 1 by definition. The above result suggests that if the server's defense model $\fs^g$ delivers a server's utility of $1-\v$, then the adversary cannot learn a $f_A \in \mathcal{F}_A$ that is more accurate than $\v$-close to the server's true model $\fs$ i.e., $P_Q(\fs \neq \fa) = \v$  with Risk Minimization algorithm. 

Now we derive the $45^{\circ}$ line of AB curve from the result of Theorem~1. We assume that the expectation of adversary's utility (as defined in Def.~\ref{def_adv}) is taken with respect to the (sampling) distribution $Q$. Specifically, from the above theorem, we can find (details in Section~\ref{case_study} in the appendix) that the adversary faces a dilemma where there are two indistinguishable functions that can server as the risk minimizers i.e., there exists a $f_B \in \mathcal{F}_S$ such that $\mathbb{E}_{x,\tilde{y}}\loss(\tilde{y}, \sgn(\fs(x))) = \mathbb{E}_{x,\tilde{y}}\loss(\tilde{y}, \sgn(f_B(x)))$. 
As a result, if the adversary chooses to learn $\fs$ with risk minimization algorithms, then without additional information, choosing between $f_B$ and $f_S$ with a even coin flipping is the only choice for the adversary. Under such circumstances, the adversary's utility is calculated to be $1- \v$ (details in Appendix~A3), which equals to server's utility $1-\v$. Therefore, this kind of equilibrium relationship gives rise to a $45^{\circ}$ line of the AB Curve as illustrated by the solid black line in Fig.~\ref{abcurve}.

Having established the $45^{\circ}$ line AB curve of a pair of attack-defense, one may wonder what is a real-world example of such a pair of attack-defense.
%that fits into the general definition discussed at the beginning of this section?
One potential candidate for the defense mechanism is the boundary differentially private layer (BDPL)\footnote{As suggested by its name, BDPL is closely related the concept of differential privacy~\cite{dwork2004privacy}. The motivation in the original paper was to apply differential-private technique to protect the decision boundary.}~\cite{zheng2019bdpl}. Because in BDPL, the server will first select a neighborhood threshold $\delta$ of its classifier.  Upon receiving a querying point $x$, if $x$ lies in the $\delta$-neighbor of $\fs$, i.e., $d(x,\fs) \leq \delta$ for a distance measure $d(\cdot,\cdot)$, then the server will flip the label with probability $
1/2-\sqrt{e^{2 t}-1}/(2+2 e^{t})
$ for a privacy parameter $t \geq 0$. 
%Empirically, with the same level of server's utility, BDPL obtains a much less adversary's utility (the lower, the better for the server) over other methods, e.g., uniformly flipping labels. {\red[I don't understand this, are we now talking about DP? DP has not been mentioned anywhere here.][the last setence is also from the paper? do they define adversary's utility the same as we do?]}
It can be verified that the BDPL is an instance of the defense mechanism discussed in Eq.~\ref{stat2} for setting $\mathcal{S}$ as the $\delta$-neighborhood of $\fs$'s decision boundary and $P(B=-1) = 1/2$.

%Specifically, in the context of BDPL, $\mathcal{S}$ can just be the region near the server's decision boundary with an appropriately chosen radius {\red[you mean BDPL satisfies this assumption? can we directly say that BDPL assumes this assumption?]}. Then, we consider the following defense strategy.  

%To see the claim that the server's model cannot be learned, we need the definition of learnability from learning theory~\cite{valiant1984theory,schapire1990strength} with modifications. In classical learning theory, the learner intends to learn a target function $f$, and it can access the oracle function $f$ by sending querying $x_i$ (independently drawn from a fixed distribution $D$) and receiving $f(x_i)$. For our setting, the adversary(learner) can only access the \textit{defense oracle} $f^g$ ($f^g$ is the defense version of $f$), i.e. sending $x_i$ and receiving $f^g(x_i)$.

\begin{figure}
    \centering
    \includegraphics[width=6.5cm, height=6.9cm]{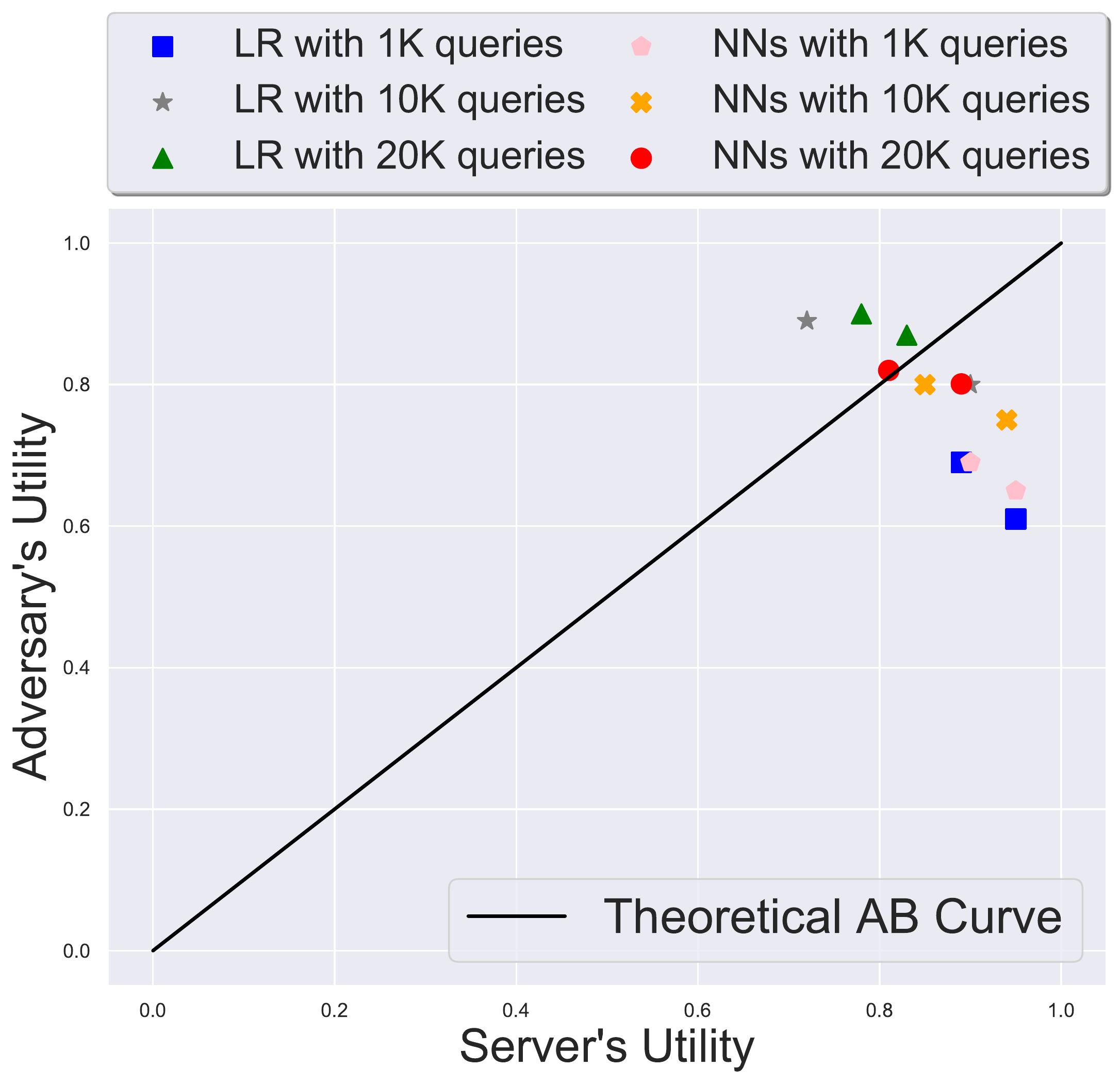}
    \vspace{-0.1in}
    \caption{Illustrations of both the theoretical AB curve and the empirical results reported in~\cite{zheng2019bdpl}. Each shape of utility pair has two entries since they are evaluated on \textit{Mushroom} and \textit{Adult} datasets, respectively. }\label{abcurve}
    \vspace{-0.2in}
\end{figure}

How are the empirical results of BDPL related to our theoretical derivations of a $45^{\circ}$ line of AB curve of an attack-defense pair?
The experimental goal in~\cite{zheng2019bdpl} is to study the effectiveness and efficiency of BDPL under Model Extraction Attack, and their setup is as follows. They trained a logistic regression (LR) model and neural networks (NNs) as the server's models on the \textit{Mushroom} and the \textit{Adult} dataset (both are from UCI machine learning repository~\cite{Dua:2019}). For the BDPL parameters, they set the boundary threshold $\delta = 0.125$ and the DP parameter to be $\v = 0.01$. Therefore the flipping probability inside the decision boundary area is $0.48$.
In Fig.~\ref{abcurve}, we plot several pairs of server-adversary's utility for two models, namely LR and NNs, with different numbers of adversary's querying sample size reported in their paper. (The definitions of both the server's and the adversary's utility are the same.) Given 1K of adversary's querying samples, the server-adversary utility pairs sit in the region with around $90\%$ of server's utility and $62\%$ of adversary's utility. This is reasonable since the adversary does not have access to enough data at the beginning. As the querying number increases (1K $\rightarrow$ 10K $\rightarrow$ 20K), the server-adversary utility pair moves towards the point $(0.8,0.8)$. 
Hence, we conclude that the empirical results match the theoretical derivation quite well as the number of querying sample sizes increase.

\section{Optimization View}\label{sec_opt_algo}

\textbf{Min-max formulation}
Built upon the `equilibrium' case exemplified previously, we now discuss how to find a more efficient strategy (compared to the $45^{\circ}$ AB Curve case).
From the model extraction and defense process described in the ``Model Extraction and Defense'' section, % Sec.~\ref{med}, 
we can formulate the process as follows.
\begin{align}\label{gene_for}
    % \max^{(S)}_{\fs \in \mathcal{F}_S} 
    \min^{(A)}_{\bbx \in \R^{np}}  \max^{(S)}_{g \in \mathcal{G} } \min^{(A)}_{\fa \in {\mathcal{F}}_A(\bbx, \widetilde{\bby}^g_n)} \mathbb{E}_{(\testx,\testy) \sim Q} \ \mathcal{L}(\testy,\fa(\testx)), 
\end{align}
where $\bbx$ denotes the adversary's queried samples, $n$ is the query size, $ \mathcal{G}$ is the server's set of defense strategies, ${\mathcal{F}}_A$ is the adversary model class,  $\widetilde{\bby}^g_n = \fs^g(X)$ represents the server's perturbed response, $\fa \in {\mathcal{F}}_A(\bbx, \widetilde{\bby}^g_n)$ means that the adversary will select a model $\fa \in \mathcal{F}_A$ based on the information $(X,\widetilde{\bby}^g_n)$, and finally the expectation is taken with respect to future test data $(\testx, \testy) \sim Q$. 
%The above formulation incorporates both the adversary's and server's perspectives. 
%Overall, from the adversary's view, it aims to simultaneously minimize $\mathbb{E}\mathcal{L}(\testy,\fa(\testx)))$ and optimize the samples $\bbx$ sent to the server.
%Meanwhile, the server aims to simultaneously select a strategy $g$ to strike an appropriate tradeoff on the AB curve based on the current strategy of the adversary. %{\red[A, B curve is a function of both $f_A$ and $f_B$, so I think both action will contribute to the tradeoff, not just the server's action? I mean, not only the server is trying to strke tradeoff, the adversary too (the adversary would like to improve his utility)]}
Note that the loss function may depend on specific tasks. For example, the loss function can be scaled $L_2$ distance, i.e., $\loss(y,f(x)) = |y-f(x)|^2/|y|^2$ in regression, or indicator function $\loss(y,f(x)) = \mathbf{1}(y(x) \neq f(x))$ in classification. 
	
%\begin{remark}[$(\v, %\delta)$-closeness]
%	The notion above may be extended in %the following way. {\red[?]} 
%\end{remark}

Next, we provide more detailed interpretations of the above formulation.
We first explain each layer of the min-max. For a given attack strategy (as described by the set of queries $\bbx$), a defense strategy (as indexed by $g$), the problem is to solve the sub-problem (for (A))
$\min_{\fa \in {\mathcal{F}}_A(\bbx, \widetilde{\bby}^g)} \mathbb{E} \ \mathcal{L}(\testy,\fa(\testx)).$
In this sub-problem, the adversary solves an empirical risk minimization problem from labeled data $\bbx, \widetilde{\bby}^g$, and the loss of $\fa$ will be evaluated at the expectation over test data distribution.
The second layer $\max^{(S)}_{g \in \mathcal{G}}$ is to select $g$ from a set of defense strategies.
The outer layer $ \min^{(A)}_{\bbx \in \R^{np}}$ represents the adversary's strategy to deliberately craft a certain number of queries to maximize the adversary's utility. We should emphasize that the above formulation is from the adversary's perspective, i.e., designing optimal samples in the awareness of the server's potential defense since the outermost problem corresponds to the min ($X$) problem.

\textbf{Algorithms}
In this section, we design practical algorithms to characterize how the adversary and the server can (approximately) solve problem \eqref{gene_for}.
As this problem is extremely challenging, we will make a few simplifications. 

First, under the typical black-box model assumption,
the server will only return discrete labels i.e.,
$\widetilde{\bby}_n =  [\sgn(\fs^g(x_1)),$
$ \sgn(\fs^g(x_1)),\sgn(\fs^g(x_2)), ...,\sgn(\fs^g(x_n))],$ corresponding to $\bbx_n =[x_1, x_2, ..., x_n]$.
In order to develop implementable algorithm, we need to relax the assumption in the sense that the server now returns soft labels i.e., $\widetilde{\bby} = [\sigma(\fs^g(x_1)), \sigma(\fs^g(x_2)),...,\sigma(\fs^g(x_n)) ] $. For notation simplicity, we would simply drop the subscript $n$.
Second, we make the following simplification to restrict the server's abilities:  We discretize the defense strategy functional space $\mathcal{G}$, and only allow the server to select from a given set of strategies $g_1, g_2, ...,g_m \in \mathcal{G}$. This way, the server only optimizes the selection vector, but not the strategy itself. 
A possible solution for the server to obtain the best defense strategy is to optimize the selection over a probability simplex $\Delta = \{\lambda_{1:m}| \sum_{j=1}^m \lambda_j = 1, \lambda_j \geq 0 \text{ for } j =1,2,...,m\}$. One benefit for adopting such discretization is that it simplifies theoretical convergence arguments.
Third, we assume that the adversary's model is fully specified (parameterized) by $\beta \in \mathbb{R}^d$,  denote as  $\fa(\cdot;\beta) \in \mathcal{F}_A$ (adversary's model class) to represent the relationship. Moreover, the adversary is allowed to choose query samples across the entire space $\mathcal{X}$. 
Since there are $m$ defensive strategies, then the adversary will train $m$ different models correspondingly. We denote $\bm{\beta} = [\bm{\beta}_1, \bm{\beta}_2, ...,\bm{\beta}_m ] \in \mathbb{R}^{m \times d}$.

Under the above setting, and given a loss function $\mathcal{L}(\cdot,\cdot)$, we obtain a new min-max bi-level optimization problem:
\begin{align}
\begin{split}
\min_{x \in \mathcal{X}}  \max_{\bm{\lambda} \in \Delta}\, & H(\bm{\beta}^*(\bbx),\bm{\lambda})= \frac{1}{m}\sum_{j=1}^m H_j(\bm{\beta}^*_{j}(\bbx))\lambda_j \\
  &=\frac{1}{m}\sum_{j=1}^{m} \frac{1}{N}\sum_{t=1}^{N} \loss(\underline{\testy}_t,\sigma(\fa(\underline{\testx}_{t}; \bm{\beta}^*_{j} )))\lambda_j,
  \end{split}
\label{bilevel} 
\\[2ex]
\text{subject to}\quad & \bm{\beta}^*_{j} = \argmin_{\bm{\beta}_j \in \mathbb{R}^d}h_j(\bbx,\bm{\beta}_j) \text{ for } j = 1,2,...,m,\nonumber
\end{align}
where 
$$ h_j(\bbx,\bm{\beta}_j) =  \sum_{i=1}^n \loss(\sigma(\fa(x_i; \bm{\beta}_j)), {\sigma(\fs^{g_j}(x_i))}) + \mathcal{R}(\bm{\beta}_j)$$ with $\mathcal{R}$ being a regularizer.
Intuitively, the upper-level problem $H(\cdot,\cdot)$  evaluates the quality of adversary's model $\fa$ %against the serve's model $\fs$ 
over future testing data $\{(\testx_t, \testy_t)\}_{t=1}^N$, and the lower-level problem i.e., $h_j$ for $j=1,2,...,m$ corresponds to adversary's model building process. 
Such a problem is quite challenging to solve since it couples {\it three} sub-problems, the lower-level loss minimization problem, the upper-level maximization problem (where the server generates defense strategies), and the upper-level minimization problem (where the adversary generates samples). 
%Even for much simpler bi-level problems with the only minimization in the upper layer, or for single-level min-max problems, simple but effective algorithms with (non)asymptotic analysis have not been developed until recently~\cite{hong2020two,lin2020gradient}. 

\begin{algorithm}[tb]
		\centering
		\caption{Min-Max Bi-level Approximation Stochastic Gradient Descent-Ascent}\label{algo1}
		\footnotesize
		\begin{algorithmic}[1]
			\renewcommand{\algorithmicrequire}{\textbf{Input:}}
			\renewcommand{\algorithmicensure}{\textbf{Output:}}
			\REQUIRE 			\textit{Initialization}: 
			$\lambda_j^0 =1/m$, $\bm{\beta}_j^0 \sim \mathcal{N}(\mathbf{0},I_d)$ for $j=1,...,m$, $\bbx^0 \sim \mathcal{N}(0,I_{n \times d})$, stepsizes $\{ r_k, s_k\}_{k \geq 0}$, and mini-batch $b$.
			 \\\hrulefill
            \FOR{$k=0,1,2,...,K-1$}
            \FOR{$j=1,2,...,m$}
            \REPEAT
            \STATE Solve $\bm{\beta}^*_{j} = \argmin_{\bm{\beta}_j \in \mathbb{R}^d}h_j(\bbx,\bm{\beta}_j)$ with gradient descent on $\beta_j$.
            \UNTIL
            \STATE Stop criteria satisfied$^{\ddagger}$.
            \ENDFOR
            \STATE $\bbx^{k+1} = \text{Proj}_{\mathcal{X}} (\bbx^{k} - r_k \overline{\nabla}_{\bbx} H^{k\dagger}) $
            \STATE $\bm{\lambda}^{k+1} = \text{Proj}_{\Delta}(\bm{\lambda}^{k} + s_k \overline{\nabla}_{\bm{\lambda}} H^{k\dagger}) $
			\ENDFOR
	        \STATE Randomly draw $\overline{\bbx}$ from $\{\bbx^k\}_{k=1}^{K}$ with uniform probability.
            \\\hrulefill\\
   $\dagger$ At $k$th iteration,       $\overline{\nabla}_{\bbx} H^k$, and $\overline{\nabla}_{\bm{\lambda}}H^k$ are stochastic estimates of $\nabla_{\bbx}H(\bm{\beta}^{k+1}(\bbx^k), \bm{\lambda}^k)$, and $\nabla_{\bm{\lambda}}H(\bm{\beta}^{k+1}(\bbx^{k+1}), \bm{\lambda}^k)$ respectively.
   \\ $^*$ 	$\text{Proj}_{\mathcal{X}}$, $\text{Proj}_{\Delta}$ are the Euclidean projection operators onto set $\mathcal{X}$ and $\Delta$ respectively.
   \\ $^{\ddagger}$ For a given $\epsilon > 0$, the procedure is terminated if $\| \nabla_{\beta_j}h_j(\bbx, \beta_j)\| \leq \epsilon$.
			
			\ENSURE $\overline{\bbx}$ and  ${\bm{\lambda}}^{K}$ .

		\end{algorithmic}
	\end{algorithm}

Motivated by these recent works, we develop a \textit{Min-Max Bi-level Stochastic Approximation Gradient Descent-Ascent} algorithm, as summarized in Algo.~\ref{algo1}. It consists of three steps of (stochastic) gradient descent. The algorithm first operates with the inner (lower level) problem in Line 2-4, in which it runs gradient descent on each of the $m$ sub-problems. Then it switches to the upper-level problem and runs gradient descent on the minimization ($\bbx$) problem and gradient ascent on the maximization ($\bm{\lambda}$) problem in Line 5-6. 

One key design consideration is that the three different kinds of updates must be executed at different ``speeds" (i.e., time scales). 
This is because, for example,  $\bm{\lambda}^{k}$ may not stay close to $\bm{\lambda}^*(\bbx^{k})$ ($\bm{\lambda}^*$ is the optimal solution of Eq.~(\ref{bilevel})) at each iteration $k$, and therefore it is not guaranteed $\nabla_{\bbx}H(\bm{\beta}^{k+1}(\bbx^k), \bm{\lambda}^k)$ would deliver a `true' descent of the objective. A similar situation arises for the lower-level problem as well. Therefore, in the proposed algorithm, the lower-level variable $\beta$ is updated in the fasted time-scale (i.e., solve to exact global min), the maximization problem is solved relatively slowly (by using some appropriate stepsizes $c_k$), while the $X$ variable is updated most slowly (by using some very small stepsizes $b_k$). 
Besides the above theoretical arguments, it is piratically meaningful to adopt such rules for optimization. First, the upper-level problem, i.e., evaluation of adversary's model, is preceded by constructing the model itself. Therefore, it is favorable to make the lower-level problem converge first. 
%From the server's point of view, it probably does not intend to alter among different strategies frequently. Thus, it makes to accelerate the convergence speed of the upper-level maximization problem ($\bm{\lambda}$). {\red[not sure I understand the last point.]}

\textit{Convergence result.} 
We state the convergence result of the proposed Algo.~\ref{algo1}. We need the following 3 assumptions.

\begin{assumption}\label{assump2}
For any function $H(\bm{\beta}, \bm{\lambda})$ and  $e(\bbx) \coloneqq H(\bm{\beta}^*(\bbx),\bm{\lambda}^*)$.
\begin{itemize}
\setlength\itemsep{0em}
   
    \item For any $\bbx \in \mathcal{X}$, $\nabla_{\bbx}H(\cdot,\bm{\lambda})$ and $\nabla_{\bm{\beta}}H(\cdot,\bm{\lambda})$ are Lipschitz continuous with respect to (w.r.t.) $\bm{\beta}$.
    \item  For any $\bbx \in \mathcal{X}$ and $\bm{\beta} \in \mathbb{R}^d,$ we have $\| \nabla_{\bm{\beta}}H(\bm{\beta},\bm{\lambda})  \| \leq C_{H},$ 
for some $ C_{H}> 0.$
    \item The function $e(\cdot)$ is weakly convex with modulus $\mu_{\ell} \geq 0$: $$ e(w) \geq e(v)+\langle\nabla e(v), w-v\rangle+\mu_{\ell}\|w-v\|^{2}, \forall w, v \in X. $$
    \item $H(\bm{\beta}, \bm{\lambda})$ is $l$-smooth and $H(\cdot,\bm{\lambda})$ is $L$-Lipschitz for each $\bm{\lambda} \in \Delta$ and $H(X,\cdot)$ is concave for each $\bbx \in \mathcal{X}.$
   
\end{itemize}
\end{assumption}

\begin{assumption}\label{assump3}
Assumptions on the lower-level functions $h_j(\bbx,\bm{\beta}_j)$ for $j=1,2,...,m.$
\begin{itemize}
\setlength\itemsep{0em}
    \item For any $\bbx \in \mathcal{X}$ and $\bm{\beta}_j \in \mathbb{R}^d,$ $h_j(\bbx,\bm{\beta}_j)$ is twice continuously differentiable in $(\bbx,\bm{\beta}_j)$;
    \item For any $\bbx \in \mathcal{X}$,  $\nabla_{\bm{\beta}_j}h(\bbx,\cdot)$ is Lipschitz continuous w.r.t. $\bm{\beta}_j \in \mathbb{R}^d,$
 and with constants $L_{h_j}$.
    \item For any $\bbx \in \mathcal{X}$, $h_j(\bbx, \cdot)$ is strongly convex in $\beta_j$, and with modulus $\mu_j \geq 0$.
    \item For any $\bbx \in \mathcal{X}$, $\nabla^2_{\bbx\bm{\beta}_j}h_j(\bbx,\cdot)$ and $\nabla^2_{\bm{\beta}_j\bm{\beta}_j}h_j(\bbx,\cdot)$ are Lipschitz continuous w.r.t. $\bm{\beta}_j \in \mathbb{R}^d.$
    \item For any $\bbx \in \mathcal{X}$ and $\bm{\beta}_j \in \mathbb{R}^d,$ we have $\|\nabla^2_{\bbx\bm{\beta}_j}h_j(\bbx,\bm{\beta}_j) \| \leq C_{h\bbx j}$ for some $C_{h\bbx j} > 0$.
     \item For any $\bbx \in \mathcal{X},$ $\min_{\bm{\beta_j}} h_j(X,\beta_j)$ has closed-form solution.

    \item For any $\bm{\beta}_j \in \mathbb{R}^d,$ $\nabla^2_{\bbx\bm{\beta}_j}h_j(\cdot,\bm{\beta}_j)$ and $\nabla^2_{\bm{\beta}_j\bm{\beta}_j}h_j(\cdot,\bm{\beta}_j)$ are Lipschitz continuous w.r.t. $\bbx \in \mathcal{X}$.
    
\end{itemize}
\end{assumption}

\begin{assumption}\label{assump4}
Assumptions on stochastic estimates for $k=1,2,...,K$.
\begin{itemize}
\setlength\itemsep{0em}
    \item $\mathbb{E}(\overline{\nabla}_{\bbx} H^k) = \mathbb{E}(\nabla_{\bbx}H(\bm{\beta}^{k+1}(\bbx^k), \bm{\lambda}^k)) $
, and $\mathbb{E}(\overline{\nabla}_{\bm{\lambda}}H^k) = \mathbb{E}(\nabla_{\bm{\lambda}}H(\bm{\beta}^{k+1}(\bbx^{k+1}), \bm{\lambda}^k))$ 
    \item $\mathbb{E}\|\overline{\nabla}_{\bbx} H^k - \nabla_{\bbx}H(\bm{\beta}^{k+1}(\bbx^k), \bm{\lambda}^k) \| \leq c^2$ for some $c>0$.
\end{itemize}
\end{assumption}

In addition to the above assumptions, we need the follow notations and definitions.
Denote $\Phi(\cdot)=\max _{\bm{\beta} \in \Delta} H(\cdot, \bm{\beta})$.

\begin{definition}
A function $\Phi_{\lambda}: \mathbb{R}^{m} \rightarrow \mathbb{R}$ is the Moreau envelope of $\Phi$ with a positive parameter $\lambda>0$ if $\Phi_{\lambda}(\mathbf{x})=\min _{\mathbf{w}} \Phi(\mathbf{w})+(1 / 2 \lambda)\|\mathbf{w}-\mathbf{x}\|^{2}$ for each $\mathbf{x} \in \mathbb{R}^{m}$
\end{definition}

\begin{definition}
 A point $\mathbf{x}$ is an $\epsilon$-stationary point $(\epsilon \geq 0)$ of a $\ell$-weakly convex function $\Phi$ if $\left\|\nabla \Phi_{1 / 2 \ell}(\mathbf{x})\right\| \leq$ $\epsilon$. If $\epsilon=0$, then $\mathbf{x}$ is a stationary point.
\end{definition}

Denote $\widehat{\Delta}_{\Phi}=\Phi_{1 / 2 \ell}\left(X_{0}\right)-\min _{X} \Phi_{1 / 2 \ell}(X)$ and $\widehat{\Delta}_{0}=\Phi\left(X_{0}\right)-H\left(X_{0}, \bm{\beta}_{0}\right).$

\begin{theorem}
Under Assumption~\ref{assump2},
~\ref{assump3} and~\ref{assump4}, with stepsizes chosen as $b_k=\Theta\left(\epsilon^{4} /\left(\ell^{3} (L^{2} +c^2) \right)\right)$ and $c_k = \Theta(\epsilon^2/lc^2)$, and with the mini-batch size of $1$, the iteration complexity of Algo.~\ref{algo1} to return an
$\epsilon$-stationary point is bounded $$O((\frac{\ell^{3} (L^{2}+c^2)  \widehat{\Delta}_{\Phi}}{\epsilon^{6}}+\frac{\ell^{3}  \widehat{\Delta}_{0}}{\epsilon^{4}})\max\{1,\frac{c^2}{\epsilon^2}\}).$$
\end{theorem}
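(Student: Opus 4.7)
The plan is to reduce the three-level stochastic problem to a standard nonconvex--concave min--max problem by exploiting the closed-form lower-level solution, and then invoke a two-timescale stochastic gradient descent--ascent analysis in the style of Lin--Jin--Jordan. Throughout, I will write $\Phi(\bbx,\bm{\lambda}) := H(\bm{\beta}^*(\bbx),\bm{\lambda})$ and $\Phi(\bbx)=\max_{\bm{\lambda}\in\Delta}\Phi(\bbx,\bm{\lambda})$, so that the target quantity is $\|\nabla \Phi_{1/2\ell}(\bbx^K)\|\le \epsilon$.

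First I would establish the regularity of the hypergradient. By Assumption~\ref{assump3}, strong convexity of $h_j(\bbx,\cdot)$ with modulus $\mu_j$ guarantees a unique minimizer $\bm{\beta}^*_j(\bbx)$ with a closed-form expression, and the implicit function theorem yields
\begin{equation*}
\nabla_{\bbx}\bm{\beta}^*_j(\bbx) = -\bigl[\nabla^2_{\bm{\beta}_j\bm{\beta}_j}h_j(\bbx,\bm{\beta}^*_j(\bbx))\bigr]^{-1}\nabla^2_{\bbx\bm{\beta}_j}h_j(\bbx,\bm{\beta}^*_j(\bbx)).
\end{equation*}
The bounds $\|\nabla^2_{\bbx\bm{\beta}_j}h_j\|\le C_{h\bbx j}$, strong convexity $\mu_j>0$, and the Lipschitz continuity of both Hessian blocks (Assumption~\ref{assump3}) show that $\bm{\beta}^*_j(\cdot)$ is Lipschitz. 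Combining with Assumption~\ref{assump2} on $H$, one obtains that $\nabla_{\bbx}\Phi(\bbx,\bm{\lambda}) = \sum_j \lambda_j\,\nabla_{\bm{\beta}_j}H(\bm{\beta}^*(\bbx),\bm{\lambda})^{\top}\nabla_{\bbx}\bm{\beta}^*_j(\bbx)$ is jointly Lipschitz in $(\bbx,\bm{\lambda})$, so $\Phi$ is $\ell$-smooth (for some $\ell$ depending on the constants in Assumptions~\ref{assump2}--\ref{assump3}) and $L$-Lipschitz in $\bbx$; concavity in $\bm{\lambda}$ is inherited from the linear dependence of $H$ on $\bm{\lambda}$.

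Next I would note that since Line~2--4 of Algorithm~\ref{algo1} solves each lower-level problem to optimality (permitted by the closed-form assumption), the $(\bbx^k,\bm{\lambda}^k)$ updates in Lines~5--6 are precisely projected stochastic gradient descent-ascent on the reduced min--max problem $\min_{\bbx}\max_{\bm{\lambda}}\Phi(\bbx,\bm{\lambda})$, with unbiased gradient estimators whose variance remains bounded by a constant multiple of $c^2$ (by Assumption~\ref{assump4} and the Lipschitz factor from the chain rule). From here, weak convexity of $e(\bbx)=\Phi(\bbx)$ with modulus $\mu_\ell$ (Assumption~\ref{assump2}) makes the Moreau envelope $\Phi_{1/2\ell}$ well-defined and continuously differentiable, and serves as the natural potential function.

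The core of the proof is a two-timescale descent-lemma argument. With the inner ascent stepsize $c_k=\Theta(\epsilon^2/(\ell c^2))$, standard concave stochastic gradient ascent drives the dual gap $\mathbb{E}\|\bm{\lambda}^k-\bm{\lambda}^*(\bbx^k)\|^2$ to $O(\epsilon^2/\ell^2)$ within $O(c^2/\epsilon^2)$ iterations of $\bm{\lambda}$. With the outer descent stepsize $b_k=\Theta(\epsilon^4/(\ell^3(L^2+c^2)))$ chosen so that $b_k/c_k=O(\epsilon^2)$, one can telescope the one-step envelope inequality
\begin{equation*}
\mathbb{E}[\Phi_{1/2\ell}(\bbx^{k+1})] \le \mathbb{E}[\Phi_{1/2\ell}(\bbx^{k})] - \tfrac{b_k}{4}\mathbb{E}\|\nabla\Phi_{1/2\ell}(\bbx^k)\|^2 + O\!\left(b_k^2\ell(L^2+c^2) + b_k\ell\cdot\tfrac{c^2}{\epsilon^2}\cdot\epsilon^2\right),
\end{equation*}
using the smoothness of $\Phi$ on the reduced problem and the dual-tracking bound. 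Summing across $K$ iterations, comparing to the initial gap $\widehat{\Delta}_\Phi$, and absorbing the warm-up cost $\widehat{\Delta}_0$ needed to first drive $\bm{\lambda}$ close to $\bm{\lambda}^*(\bbx^0)$, yields the iteration complexity
\begin{equation*}
K = O\!\left(\Bigl(\tfrac{\ell^{3}(L^{2}+c^{2})\widehat{\Delta}_{\Phi}}{\epsilon^{6}}+\tfrac{\ell^{3}\widehat{\Delta}_{0}}{\epsilon^{4}}\Bigr)\max\{1,\,c^2/\epsilon^2\}\right),
\end{equation*}
for a uniformly sampled $\overline{\bbx}$ to be $\epsilon$-stationary in the Moreau-envelope sense.

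The main obstacle I anticipate is the coupling between the bi-level reduction and the lagging dual variable $\bm{\lambda}^k$: the stochastic direction $\overline{\nabla}_{\bbx}H^k$ is an unbiased estimator of $\nabla_{\bbx}\Phi(\bbx^k,\bm{\lambda}^k)$, not of $\nabla\Phi_{1/2\ell}(\bbx^k)$, so the descent lemma must absorb both the implicit-function error from the hypergradient and the tracking error $\|\bm{\lambda}^k-\bm{\lambda}^*(\bbx^k)\|$. The stepsize ratio $b_k/c_k = \Theta(\epsilon^2)$ is precisely what reconciles the two time scales, and checking that the Lipschitz constants propagated through the implicit function theorem do not inflate $\ell$ beyond the stated dependence is the most delicate bookkeeping step.
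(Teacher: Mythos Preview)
Your proposal is correct and follows the same approach as the paper: the closed-form lower-level solution collapses the bi-level structure to a nonconvex--concave min--max problem in $(\bbx,\bm{\lambda})$, after which the two-timescale SGDA complexity of Lin--Jin--Jordan applies directly. The paper's own proof is a two-line reduction that simply cites that result, whereas you have additionally unpacked its internals (implicit-function regularity of $\bm{\beta}^*(\bbx)$, Moreau-envelope potential, dual-tracking error), which is consistent but more detail than the paper supplies.
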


\begin{proof}
 Note that in Line 4 of Algo.\ref{algo1}, with Assumption~\ref{assump3}, each lower-level problem (i.e., $\beta_j$ for $j=1,2,\ldots,m$) is solved exactly with closed-form solution. Hence,  Problem~(\ref{bilevel}) simply reduces to a min-max optimization problem, whose convergence results follow from~\cite{lin2020gradient} under Assumption~\ref{assump2},and~\ref{assump4}. 
\end{proof}

We remark that the result only guarantees that the algorithm will visit an $\epsilon$-stationary point within a
certain number of iterations and return $\bar{X}$ which is drawn from $\{X_t\}_{t=1}^K$ at uniform. This does not guarantee
that the last iterate $X^K$ is the desired $\epsilon$-stationary point. Such a scheme and convergence result are standard in
nonconvex optimization for SGD to find stationary points. In practical implementation, we stop the algorithm as long as the training loss no longer decrease significantly.

\textit{Run-time Analysis.} We provide a run-time analysis for the Algo.~\ref{algo1} in the following. The key message is that 
the computation will be relatively efficient if the mini-batch size $b$ and the model parameter size $d$ are moderately large.

The main computational burden comes from the inverse operation on the hessian matrix with respect to the model parameters ${\beta}$ (with size $d$) for the update of $X$ in Line 7 of Algo.~\ref{algo1}. To alleviate this, we use Neumann series, namely $A^{-1} = \lim_{i \rightarrow \infty}\sum_i A^i$ to approximate the inverse of hessian with computational complexity $O(ed^2)$, where $e$ is the approximation steps used. 
For the update of the model parameter ${\beta}$ in Line 4, we use SGD with mini-batch size $b$ and the computational cost for each iteration is $O(b+d)$.
For the update of the weight parameter ${\lambda}$ (with size $m$) in Line 9, the computational cost for each iteration is $O(b+m)$.
Hence, the computational cost of the entire algorithm is $O(K(b+d)mq + Ked^2 + K(m+b))$, where $K$ is the total iterations of the algorithm and $q$ is total iterations of the step for Line 4. The computation will be relatively efficient if the mini-batch size $b$ and the model parameter size $m$ are moderately large.

\vspace{-0.15cm}
\section{Experimental Study}
\begin{figure*}[!htb]
	\centering
	\subfloat[][]{\includegraphics[width=4.2cm,height =4.2cm]{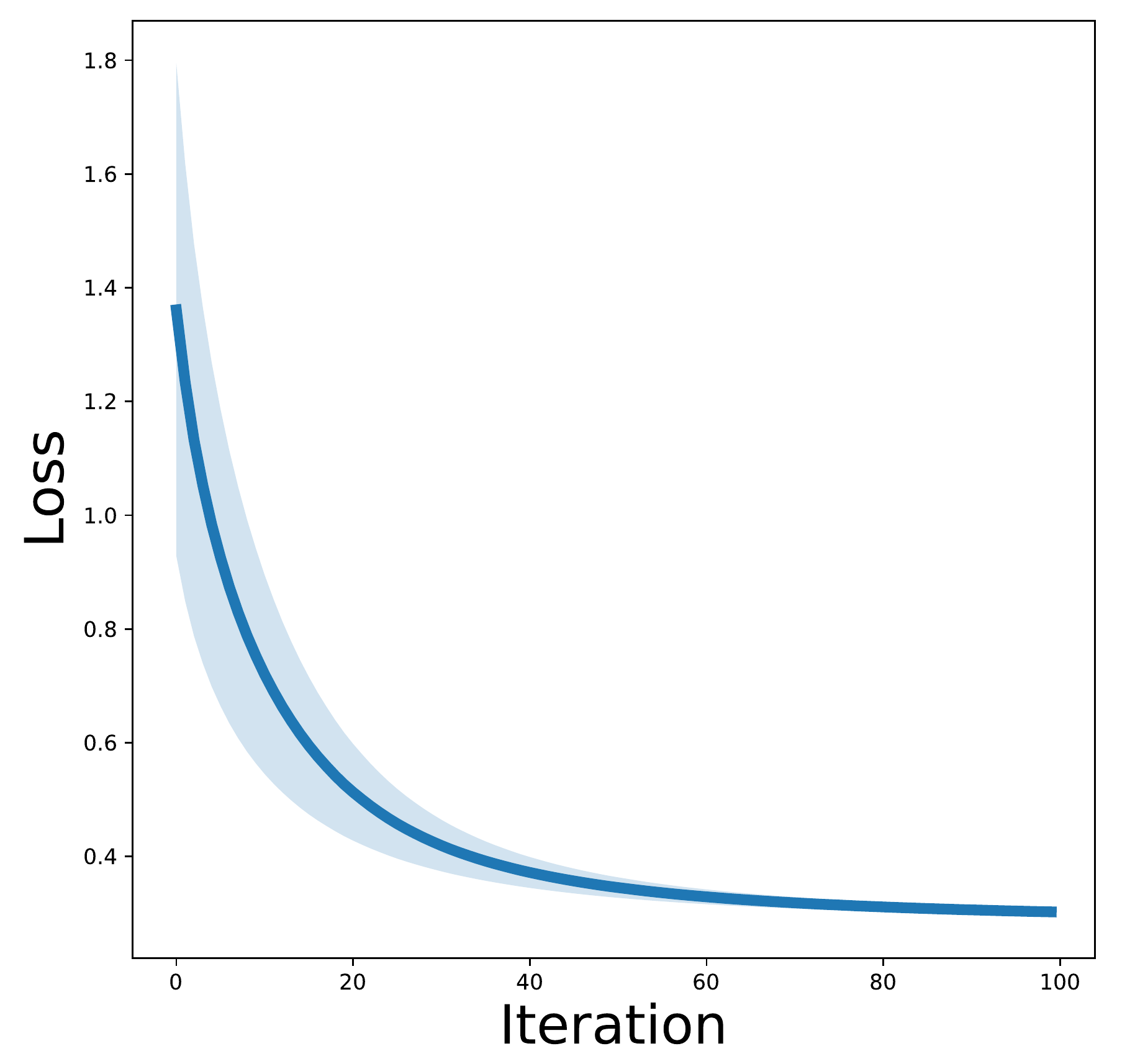}\label{train_loss}}\hfil
	\subfloat[][]{\includegraphics[width=4.2cm,height =4.2cm]{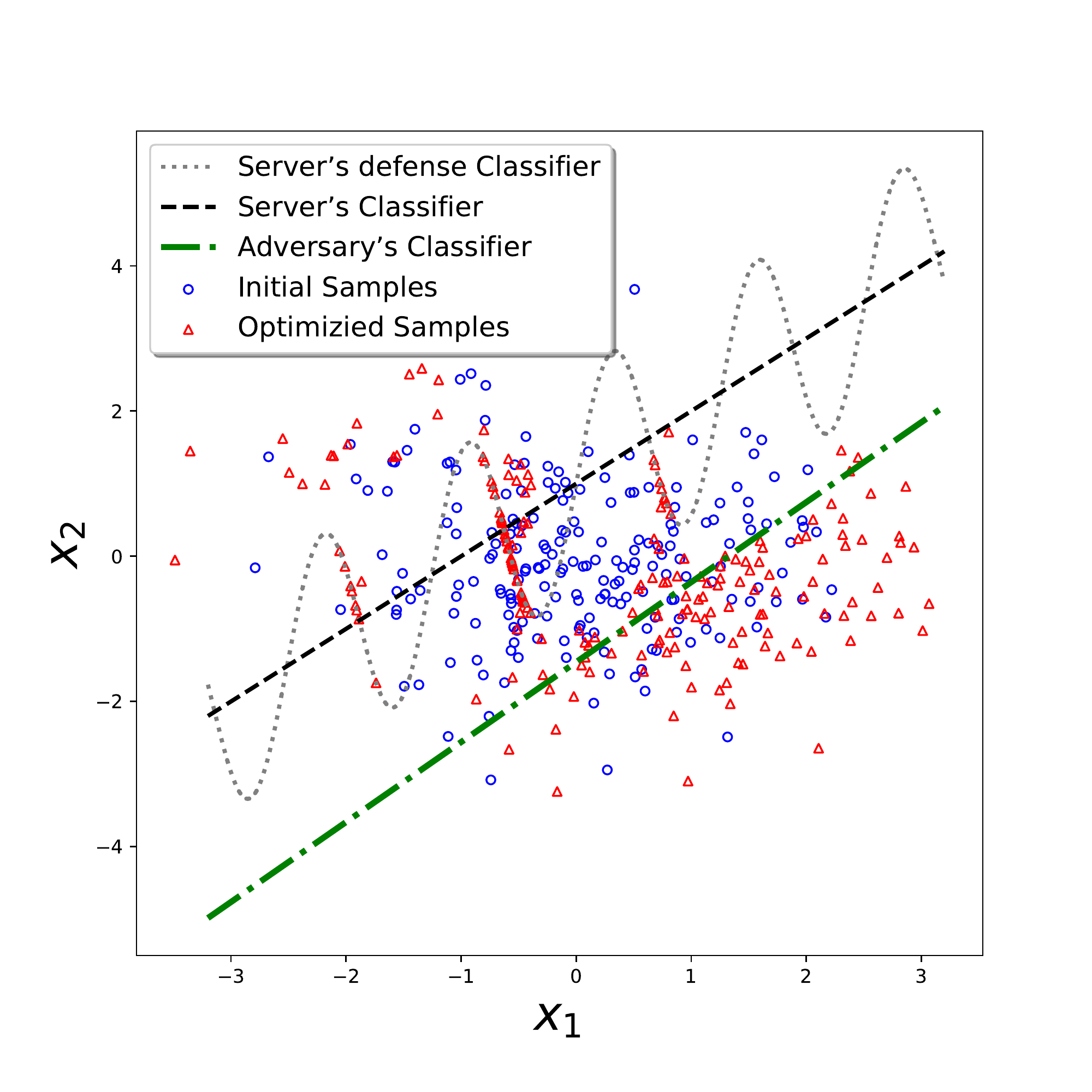}\label{90th}}\hfil
		\subfloat[][]{\includegraphics[width=4.2cm,height =4.2cm]{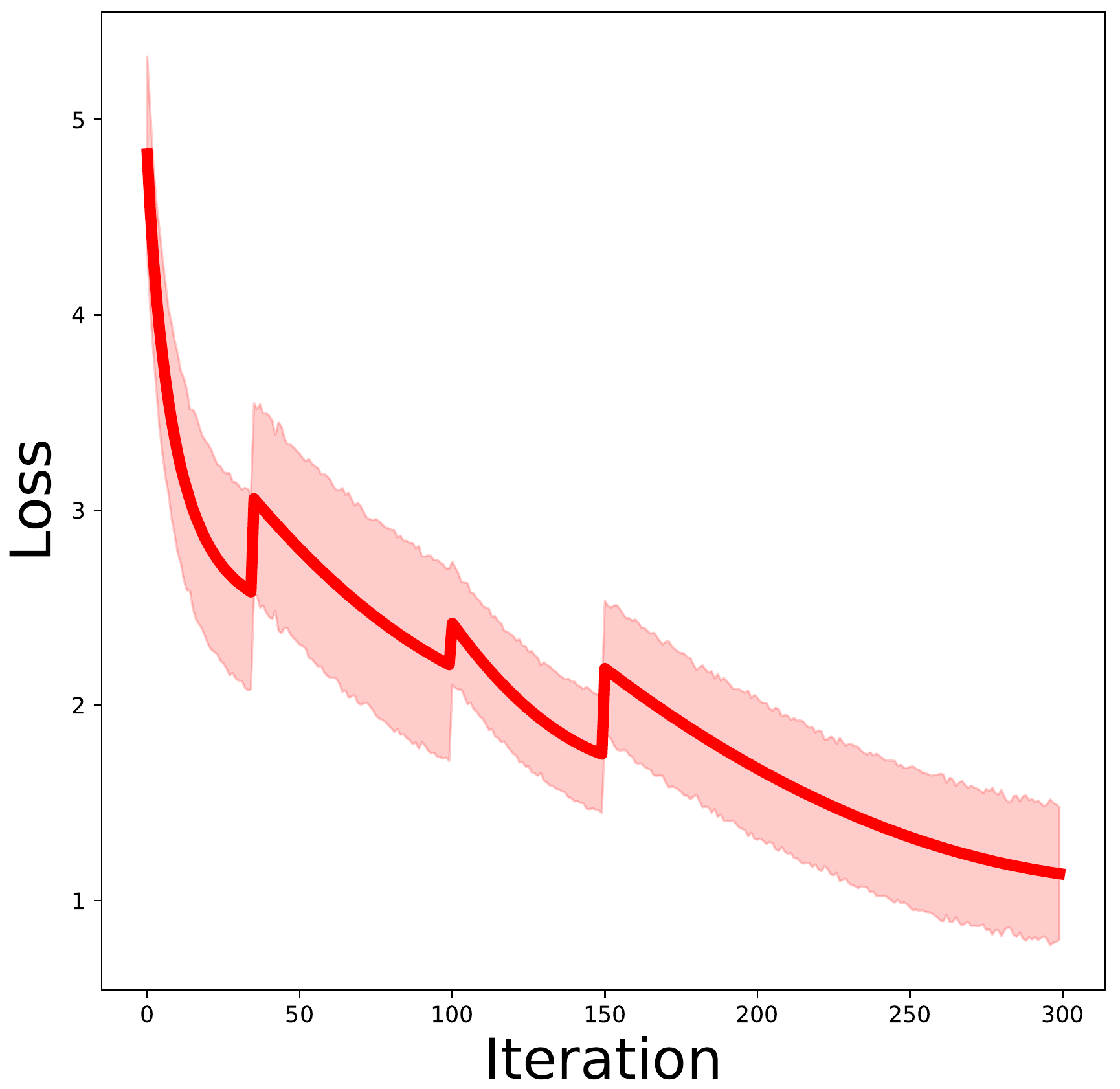}\label{tsne}}\hfil
	\subfloat[][]{\includegraphics[width=4.2cm,height =4.2cm]{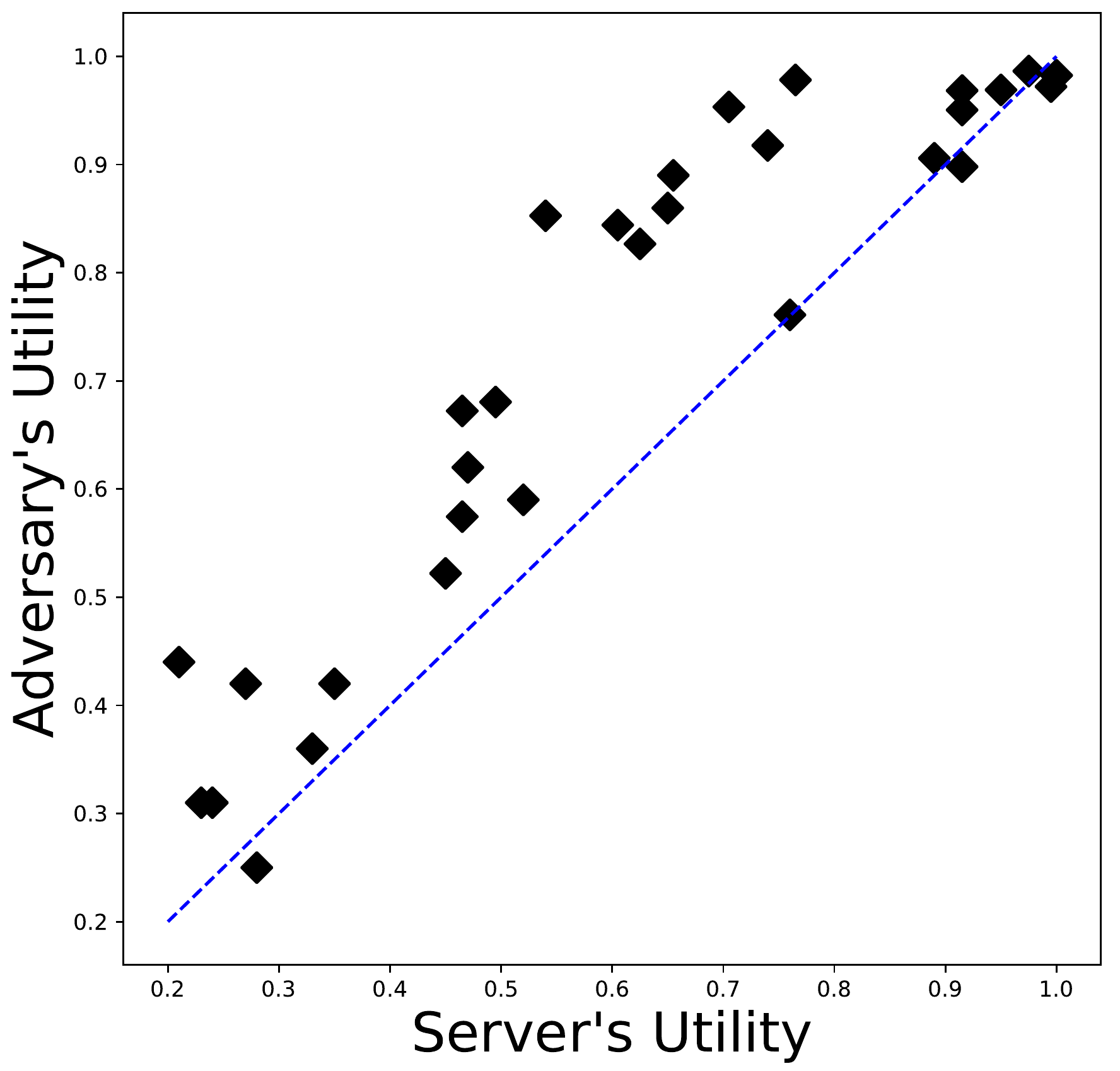}\label{abc}}
	
	\vspace{-0.1in}
	\caption{Results of the proposed Algo.~\ref{algo1} on (a) \& (b) synthetic linear data and (c) \& (d) MNIST dataset.
    Fig.~(a) illustrates the training loss on synthetic data over 20 replications where the solid blue line is the mean, and the shaded region represents the standard deviation. Fig.~(b) is a snapshot of optimized querying samples at the $90$th iteration. Fig.~(c) illustrates the training loss on MNIST over 20 replications. Finally, Fig.~(d) plots (server's utility, adversary's utility) pairs.
	}\label{sample_path}
	\vspace{-0.1in}
\end{figure*}
The goals of our empirical study are two-fold. First, we show the convergence of the proposed Algo.~\ref{algo1} under several different scenarios. Second, we illustrate the shape of the optimal querying samples obtained from solving Problem (\ref{bilevel}) and depict the Adversary-Benign pairs/curves to demonstrate the relative efficiency of a specific pair of attack and defense. We consider the following three different settings of model extraction attack and defense denoted by \textbf{T1}, \textbf{T2} and \textbf{T3}.

\textbf{Server's Model Types and Defense.}
The server will use: (i) \textbf{(T1)} a linear classifier as its service model and an additive $\sin(\cdot)$ perturbation as its defense strategy, 
(ii) \textbf{(T2)} 
a two-layer feed-forward neural network with soft-max as its service model and another two-layer neural network with different regularization parameters as the defense mechanism, and
(iii) \textbf{(T3)}
a CNN as its service model and a temperature-based soft-max function with tuning parameter $T$ as its defense strategy.

\textbf{Sources of Server's Models.} Server's models are: (i) \textbf{(T1)} synthetically generated,
(ii) \textbf{(T2)} trained on the digits `2' and `8' with MNIST~\cite{lecun1998gradient}, and (iii) \textbf{(T3)} trained on Fashion-MNIST~\cite{xiao2017fashion} with $10$-class.

\textbf{Adversary's Local Model Types.}
The server will use: (i) \textbf{(T1)} logistic regression as its local model 
(ii) \textbf{(T2)} a two-layer neural network as its local model, and (iii) \textbf{(T3)} a two-layer neural network with as its local model.

\textbf{Evaluation Metrics.}
\textbf{T1 \& T2} The loss function in Eq.~(\ref{bilevel}) used for implementing the algorithm is cross-entropy.
On the other hand, the loss function for calculating the server's and adversary's utility (in Def.~\ref{def_ser} and \ref{def_adv}) is $0-1$ loss.
\textbf{T3} The loss function in Eq.~(\ref{bilevel}) used for implementing the algorithm is cross-entropy. 
The loss function for calculating the server's and adversary's utility (in Def.~\ref{def_ser} and \ref{def_adv}) is $\mathcal{L}_{r}(p,q) = \mathbf{1}\{ \| p -q \|^2_2 \geq r \}$ for two probability vectors $p$, $q$ and a positive number $r$.

\vspace{-0.15cm}
\subsection{Experimental results}

\textbf{T1.}
We generate $10^5$ test data  $\underline{x}_t$ uniformly from $[-10,10] \times [-10,10]$, and the labels corresponding to them are $\testy_t = 2*\ind\{\underline{x}_{t,1} \geq \underline{x}_{t,2} -1 \} -1$ for $t=1,\ldots,10000$. 
Let the server's function be $\sgn(f_S(x_i)) = \sgn(x_{i,1} -x_{i,2} + 1)$,  the defense strategy be  $\sgn(\fs^g(x_i)) = \sgn(x_{i,1} + m \sin(\omega x_{i,1} -x_{i,2}+1)$, and $\mathcal{R}$ be $\ell_2$ regularizer.
We set $m = [0.1, 0.5,1,5,10,15]$ and $\omega = [\pm 0.1, \pm 1, \pm 5, \pm 8]$, so there are in total 48 combinations of defense strategies for the server.
The adversary will use the logistic regression, with initial querying samples $x_i \sim \mathcal{N}(0,I_2)$ for $i=1,2,...,200$. 
Regarding the lower-level problem as listed in Line~4 of Algo.~\ref{algo1}, we update the upper-level min-max problem after every 5 iterations of the lower-level problem.

Fig.~\ref{train_loss} demonstrates the adversary's training loss (over 20 replications, solid blue line for mean, shaded region for $\pm 1$ standard deviation) against the iteration. With a diminishing stepsize rule, it converges within 100 iterations.
Fig.~\ref{90th} is a snapshot of (optimized) samples (red triangles) at the $90$th iteration. We noticed that those points are previously lying in the region where labels have been flipped gradually shifted out to the clean region.
In addition, a proportion of optimized samples lay on the decision boundary of the server's defense mechanism, which corroborates the intuition that protecting/attacking the decision boundary is efficient and effective for the server/adversary in Active Learning literature~\cite{settles2009active}.
Moreover, we see that those decision-boundary samples form a symmetric shape and perpendicularly to the true classifier (in black dashed line).
%In terms of the adversary's reconstructed model $\fa$ (denoted by green dash-dot line), because the server's defense is fixed (hence weak), the adversary could sequentially build a relatively good model to mimic the true classifier.

\textbf{T2.} 
We evaluate the proposed algorithm on MINIST~\cite{lecun1998gradient} with a binary classification task on the digits `2' and `8'. 
The initial $100$ query samples and $10^5$ test data are from the uniform distribution.
Fig.~\ref{tsne} demonstrates the prediction performance of the adversary's classifier (over 15 replications, solid red line for mean, shaded region for $\pm 1$ standard deviation). We observed that the prediction performance of the adversary's model gradually improves as the optimization proceeds.
Also, we noted that there are several spikes along the curve. The occurrence of these spikes are mainly due to the fact that the optimal defense strategy with index $\argmax \bm{\lambda} = [\bm{\lambda}_1, \bm{\lambda}_2,...,\bm{\lambda}_m]$ is changing.

By using the optimized querying samples $\overline{\bbx}$ (obtained by running Algo.~\ref{algo1}) as the adversary's attack strategy, we plot several pairs of Adversary-Benign utility in Fig.~\ref{abc}. Almost all the points lie above the $45^{\circ}$ line (blue dashed line), which reflects that the adversary's attack is relatively more efficient than the server's defense for these pairs of attack-defense. Such a case is reasonable since the adversary obtains its attack strategy from solving Eq.~(\ref{bilevel}), which is designed in favor of the adversary by nature.

\textbf{T3.}
We further test the proposed algorithm on Fashion-MNIST. We observed similar results as in the task \textbf{T2} above, and include the details in 
Section~\ref{add_exp} in the appendix.

\vspace{-0.15cm}
\section{Conclusion}\label{conclu}

In this work, we studied the fundamental tradeoffs between the model utility and privacy (in terms of the adversary's utility). The key ingredients of development include a notion of the adversary-benignity curve to evaluate attack-defense pairs, a unified formulation of attack-defense strategies into the min-max bi-level optimization, and an operational algorithm. There are several interesting future problems. One problem is to derive theoretical bounds on the adversary-benignity curve for different loss functions. Also, how to adapt the optimization framework to incorporate specific side information is of interest for further investigation. 
The appendix contains proofs and more experimental studies.
% \newpage

%%%%%%%%%%%%%%%%%%%%%%%%%%%%%%%%%%%%%%%%%%%%%%%%%%%%%%%%%%%%
\newpage
\onecolumn

%\setcounter{secnumdepth}{1} %May be changed to 1 or 2 if section numbers are desired.

%%%%%%%%%%%%%%%%%%%%%%%%%%%%%%%%%%%%%%%%%%%%%%%%%%%%%%%%%%%%
\begin{center}
    {\bf \Large Appendix}
\end{center}

The appendixary document includes the proof of Theorem~\ref{theo1}, an additional case study on the AB Curve along with related technical analysis, two additional experimental results.

\appendix
\section{Proof and Case Study}\label{case_study}
\textbf{Proof of Theorem~\ref{theo1}}
\begin{proof}
By the construction of $f_S^g$, we have 
\begin{align*}
&\mathbb{E}_{x,\tilde{y}}\loss(\tilde{y}, \sgn(f_S(x))) = \int_{\mathcal{X}}\loss(\tilde{y}, \sgn(f_S(x))) dx \\
&= \int_{\mathcal{S}}\loss(\tilde{y}, \sgn(f_S(x)))dx +  \int_{\mathcal{X} \setminus \mathcal{S}}\loss(y, \sgn(f_S(x)))dx \\
&= \frac{1}{2}\int_{\mathcal{S}} \{ \loss(y, \sgn(f_S(x))) + \loss(\ty, \sgn(f_S(x))) \} dx \\
&= \v
\end{align*}

Similarly, from the definition of $f_B$, we have 
\begin{align*}
&\mathbb{E}_{x,\tilde{y}}\loss(\tilde{y}, \sgn(f_B(x))) = \int_{\mathcal{X}}\loss(\tilde{y}, \sgn(f_B(x))) dx \\
&= \int_{\mathcal{S}}\loss(\tilde{y}, \sgn(f_B(x)))dx +  \int_{\mathcal{X} \setminus \mathcal{S}}\loss(y, \sgn(f_B(x)))dx \\
&= \frac{1}{2}\int_{\mathcal{S}} \{ \loss(y, \sgn(f_B(x))) + \loss(\ty, \sgn(f_B(x))) \} dx \\
&= \v \\
&= \mathbb{E}_{x,\tilde{y}}\loss(\tilde{y}, \sgn(\fs(x)))
\end{align*}

Therefore, if the adversary builds its model $\fa$ by using risk minimization algorithm, it cannot distinguish between $f_B$ and $\fs$. By simple calculations, we have $\mathbb{E}_{x,y}\loss(y,\sgn(f_B(x))) = P(\sgn(f_S(x)) \neq \sgn(f_B(x))) = 2\v$.
From the definition of $\v$-learnability, the adversary always needs to set $\fa \leftarrow \fs$, so that it can obtain $P(\sgn(f_S(x)) \neq \sgn(f_A(x))) = 0 < \gamma \in (0,\v)$ with probability one. However, such a selection is not feasible with two indistinguishable options, namely $\fs$ and $f_B$. Hence, we conclude the result.

\end{proof}

\section{An Additional Case Study on the AB Curve}

To further illustrate the AB Curve, we consider the uniformly flipping defense strategy, defined by 
\begin{align}
  \fs^g(x) =
    \begin{cases}
      \fs(x) & \text{with probability $1 - c_x$}\\
      - \fs(x) & \text{with probability $c_x$}
    \end{cases}
\label{unif}
\end{align}
and $\ty = \sgn(\fs^g(x))$.
In other words, the server will flip the (binary) label of its authentic output before responding to the user.
As before, the risk under this type of defense strategy is $\mathbb{E}_{x,\tilde{y}}\mathcal{L}(\tilde{y}, \sgn(\fs(x))).$
And the adversary will obtain its model by solving a risk minimization problem
$\fa = \argmin_{f \in \mathcal{F}_A}\mathbb{E}_{x,\tilde{y}}\mathcal{L}(\tilde{y}, \sgn(f(x))). $
The following results show that \textbf{i)} the adversary can exactly extract the server's model given that server's utility is above $0.5$, and \textbf{ii)} the adversary will reconstruct a completely wrong model if the server's utility is below $0.5$.
\begin{proposition}\label{propos1}
\textbf{i)} If the server uses the strategy defined in Eq.~\ref{unif} with probability $c_x$ (for query $x$) less than $0.5$, then the adversary will exactly extract the model, namely
\begin{align}
    \sup_{x \in \X} c_x < 0.5 \Rightarrow\mathbb{E}_{(x,y)}\mathcal{L}(y,\sgn(\fa(x)))=0. \nonumber
\end{align}
\textbf{ii)}  If the server uses the strategy defined in Eq.~\ref{unif} with probability $c_x$ (for query $x$) greater than $0.5$, then the adversary will obtain zero utility for the reconstructed model $\fa$, namely
\begin{align}
    \inf_{x \in \X} c_x > 0.5 \Rightarrow\mathbb{E}_{(x,y)}\mathcal{L}(y,\sgn(\fa(x)))=1. \nonumber
\end{align}

\end{proposition}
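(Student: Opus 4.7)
The plan is to analyze the adversary's training objective pointwise in $x$, identify the sign of $\fa(x)$ dictated by the sign of $1/2 - c_x$, and then evaluate the resulting classifier against the true labels $y = \sgn(\fs(x))$. First, I would rewrite the adversary's population risk under the flipping mechanism~\eqref{unif}. Since $\ty = \sgn(\fs(x))$ with probability $1-c_x$ and $\ty = -\sgn(\fs(x))$ with probability $c_x$, for any candidate $f \in \mathcal{F}_A$,
\begin{align*}
\mathbb{E}_{x,\ty}\mathcal{L}(\ty,\sgn(f(x)))
=\int_{\X}\bigl[c_x\,\ind\{\sgn(f(x))=\sgn(\fs(x))\}+(1-c_x)\,\ind\{\sgn(f(x))\neq\sgn(\fs(x))\}\bigr]dP(x).
\end{align*}
Thus at each $x$ the integrand takes value $c_x$ when the sign agrees with $\fs(x)$ and $1-c_x$ otherwise; the pointwise minimum equals $\min(c_x,1-c_x)$ and is uniquely achieved by the sign matching the smaller of the two quantities.

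For part (i), the hypothesis $\sup_{x\in\X} c_x<1/2$ gives $c_x<1-c_x$ for every $x$, so the pointwise minimizer is $\sgn(f(x))=\sgn(\fs(x))$ on all of $\X$. Provided $\mathcal{F}_A$ is rich enough to contain a function whose sign pattern agrees with $\fs$ everywhere (a standard realizability-type assumption that implicitly underlies Theorem~\ref{theo1}), the risk minimizer $\fa$ satisfies $\sgn(\fa(x))=\sgn(\fs(x))$, hence
\begin{align*}
\mathbb{E}_{(x,y)}\mathcal{L}(y,\sgn(\fa(x)))=\mathbb{E}_x\,\ind\{\sgn(\fs(x))\neq\sgn(\fs(x))\}=0.
\end{align*}
The argument for part (ii) is symmetric: under $\inf_{x\in\X}c_x>1/2$ we have $c_x>1-c_x$ pointwise, so the pointwise minimizer flips every sign, $\sgn(\fa(x))=-\sgn(\fs(x))$. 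Evaluating against $y=\sgn(\fs(x))$ then yields $\mathbb{E}_{(x,y)}\mathcal{L}(y,\sgn(\fa(x)))=1$.

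The main obstacle to watch for is the realizability issue: the pointwise-to-global jump assumes that the sign pattern $\pm\sgn(\fs)$ is attainable within $\mathcal{F}_A$. If $\mathcal{F}_A$ is strictly smaller, one would instead need to argue that the pointwise minimizer nonetheless lies in $\mathcal{F}_A$ (e.g., because $\fs\in\mathcal{F}_A$ and $\mathcal{F}_A$ is closed under negation), which I would state as an explicit assumption preceding the proposition. A second, minor subtlety is that the boundary $c_x=1/2$ is excluded by the strict inequalities $\sup_x c_x<1/2$ and $\inf_x c_x>1/2$, which is precisely what makes the pointwise minimizer unique and lets us identify the global risk minimizer up to sign without measure-zero caveats.
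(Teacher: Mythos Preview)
Your proof is correct and follows essentially the same route as the paper: both rewrite the noisy risk as $\int_{\X}\bigl[(1-2c_x)\,\ind\{\sgn(f(x))\neq\sgn(\fs(x))\}+c_x\bigr]\,dP(x)$ and read off the pointwise minimizer from the sign of $1-2c_x$. The only differences are cosmetic: the paper outsources part~(i) to a citation while you argue it directly, and you make explicit the realizability assumption (that $\pm\sgn(\fs)$ is attainable in $\mathcal{F}_A$) which the paper leaves implicit via its earlier standing assumption that adversary and server share the same model class.
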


The above result indicates that the adversary `wins' the game if the server maintains a utility level above $0.5$, since the adversary can build a model $\fa$ with perfect accuracy (compared with server's model $\fs$). The reason for such ineffectiveness (for the server) is due to the \textit{Uniform} principle. 
\begin{proof}[Proof of Proposition~\ref{propos1}]
    The first part follows from \cite{manwani2013noise}.
    We prove the second part. 
    By definition, we have $$\mathbb{E}_{x,\tilde{y}}\loss(\tilde{y}, \sgn(f_A(x))) = \int_{\mathcal{X}}\loss(\tilde{y}, \sgn(\fa(x))) dx$$ 
    $$ =\int_{\mathcal{X}}\{(1-c_x) \loss(y, \sgn(f_A(x)))+ c_x \loss(-y, \sgn(\fa(x))) \}dx $$
    $$ = \int_{\{x : \sgn(\fa(x)) \neq \tilde{y}\}}(1-2c_x)\loss(y, \sgn(f_A(x))) dx + \int_{\mathcal{X}} c_x dx.$$

Since $1 - 2c_x < 0$ for all $x \in \mathcal{X}$, in order to minimize $\mathbb{E}_{x,\tilde{y}}\loss(\tilde{y}, \sgn(f_A(x))),$ we must have $\loss(y,\sgn(\fa(x))) = 1$ for all $x \in \mathcal{X}.$ Therefore, we conclude that $ \mathbb{E}_{(x,y)}\mathcal{L}(y,\sgn(\fa(x)))=\int_{\mathcal{X}} 1 dP_x = 1.$
\end{proof}

\vspace{-0.5cm}
\section{Additional Experimental Results}\label{add_exp}
\subsection{Fashion-MNIST Dataset}\label{fasdata}
We further test on Fashion-MNIST~\cite{xiao2017fashion}, and the detailed settings are listed in Experimental Study section in the main paper. Server's is obtained by training on the normal Fashion-MNIST data. 
The initial $100$ query samples and $10^5$ test data are from the uniform distribution.
%Fig.~demonstrates the prediction performance of the adversary's classifier (over 15 replications, solid red line for mean, shaded region for $\pm 1$ standard deviation). We see the prediction performance of adversary's model is gradually improving as the optimization proceeds.
%Also, we observe that there are several spikes along the curve. The occurrence of these spikes are mainly due to the fact that the optimal defense strategy with index $\argmax \bm{\lambda} = [\bm{\lambda}_1, \bm{\lambda}_2,...,\bm{\lambda}_m]$ is changing. 

By setting the optimized querying samples $X$ (obtained by running Algo.~\ref{algo1}) as the adversary's attack, we plot several pairs of Adversary-Benign in Fig.~\ref{fba} Almost all the points lie above the $45^{\circ}$ line (blue dashed line) meaning that the adversary's attack is relatively more efficient than the server's defense for these pairs of attack-defense. Such a case is reasonable since the adversary obtains its attack strategy from solving Eq.(\ref{bilevel}) which is designed in favor for the adversary by nature.

\begin{figure}[!htb]
    \centering
    \includegraphics[width = 6cm, height = 6cm]{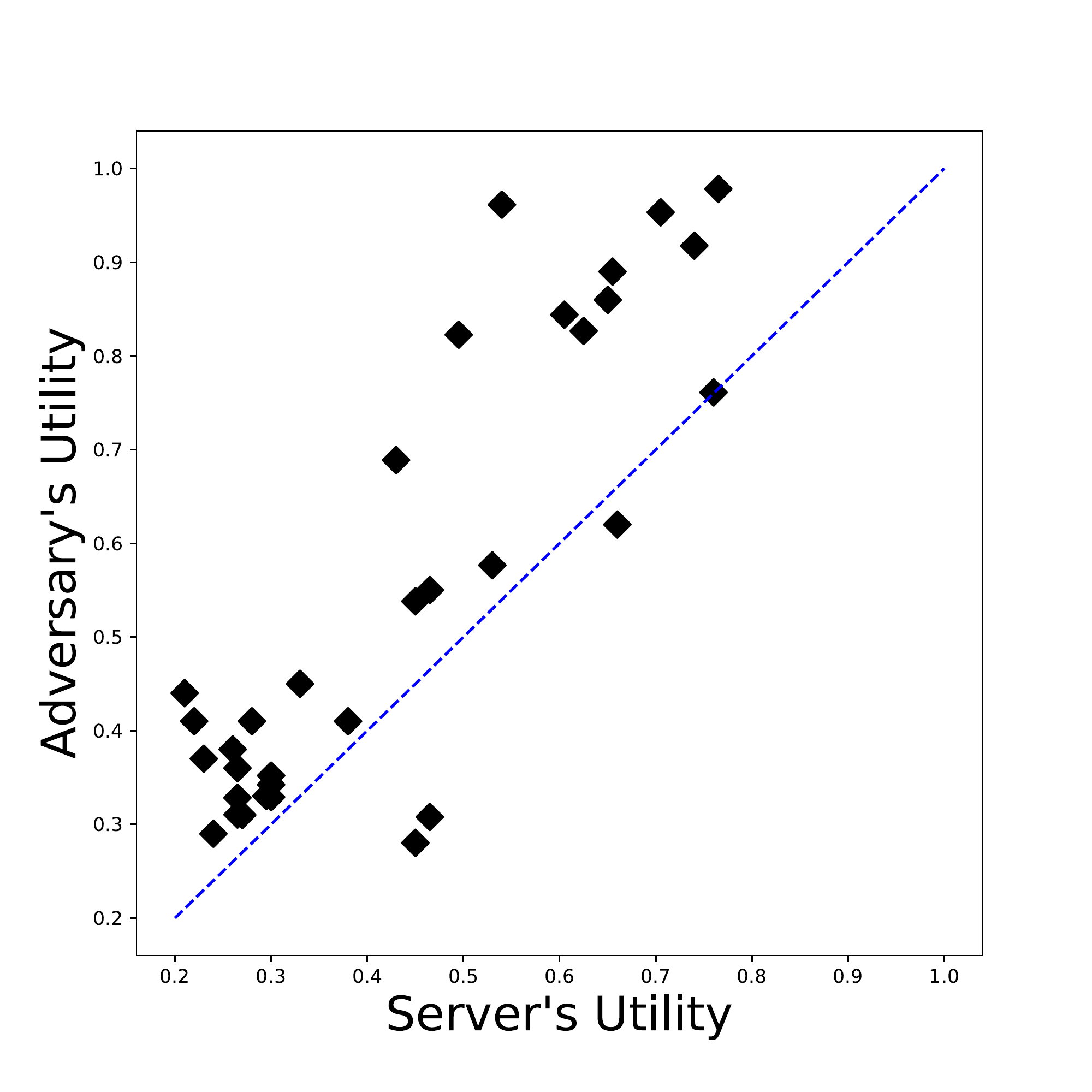}
    \caption{Plot of AB (Adversary-Benign) pairs.}
    \label{fba}
    \vspace{-0.5cm}
\end{figure}

\subsection{Selection over the Defense Strategy Set}

Following the setting as described in the Experimental Study Section of the main paper,  
we demonstrate the server's selection (represented by $\bm{\lambda}$) over the defense strategy set. 
Recall that $\bm{\lambda}$ is the probability simplex that appears in the upper-level maximization problem.
Fig.~\ref{selection_process} below illustrates the evolving process of $\bm{\lambda}$ against iterations. 
At the $0$th iteration, the server puts the folded normal distribution over $\bm{\lambda}$. After one iteration on the upper-level min-max problem, the weights for most strategies shrink to 0, and the strategy with index 25 is the most favorable one. As the algorithm further proceeds, the strategy with index 42 stands out, and eventually, it gains weight 1. The final strategy selected by the server is $(m,\omega) = (15,0.1)$. Such a result is reasonable since, without constraint on the benign utility, the server is likely to choose the one that flips the most querying samples to protect its model.

\begin{figure*}[h]
	\centering
	\subfloat[][ Weight over strategy set at  0th iteration ]{\includegraphics[width=4.6cm]{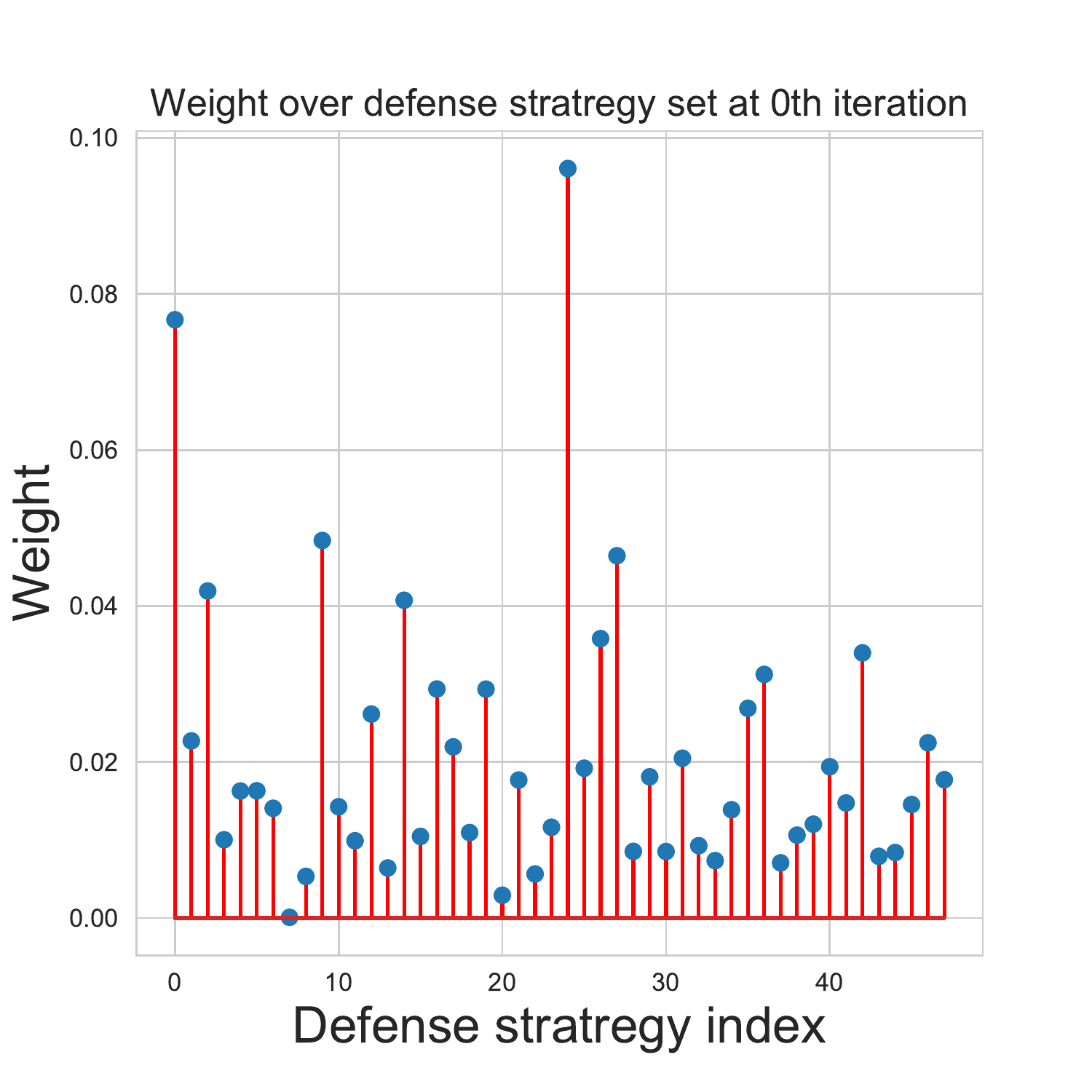}}\hfil  
	\subfloat[][ Weight over strategy set at  1th iteration]{\includegraphics[width=4.6cm]{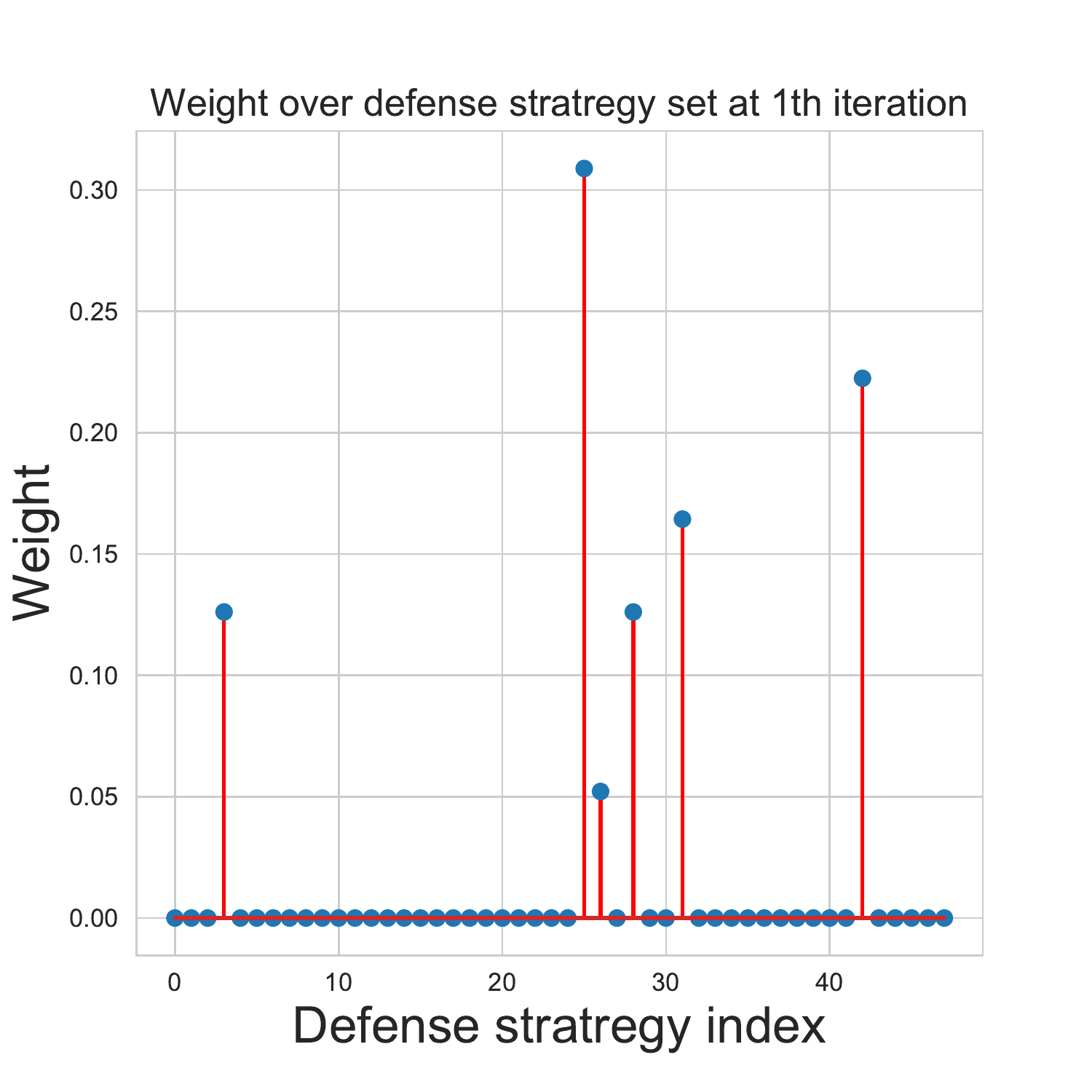}}\hfil
		\subfloat[][ Weight over strategy set at  2th iteration]{\includegraphics[width=4.6cm]{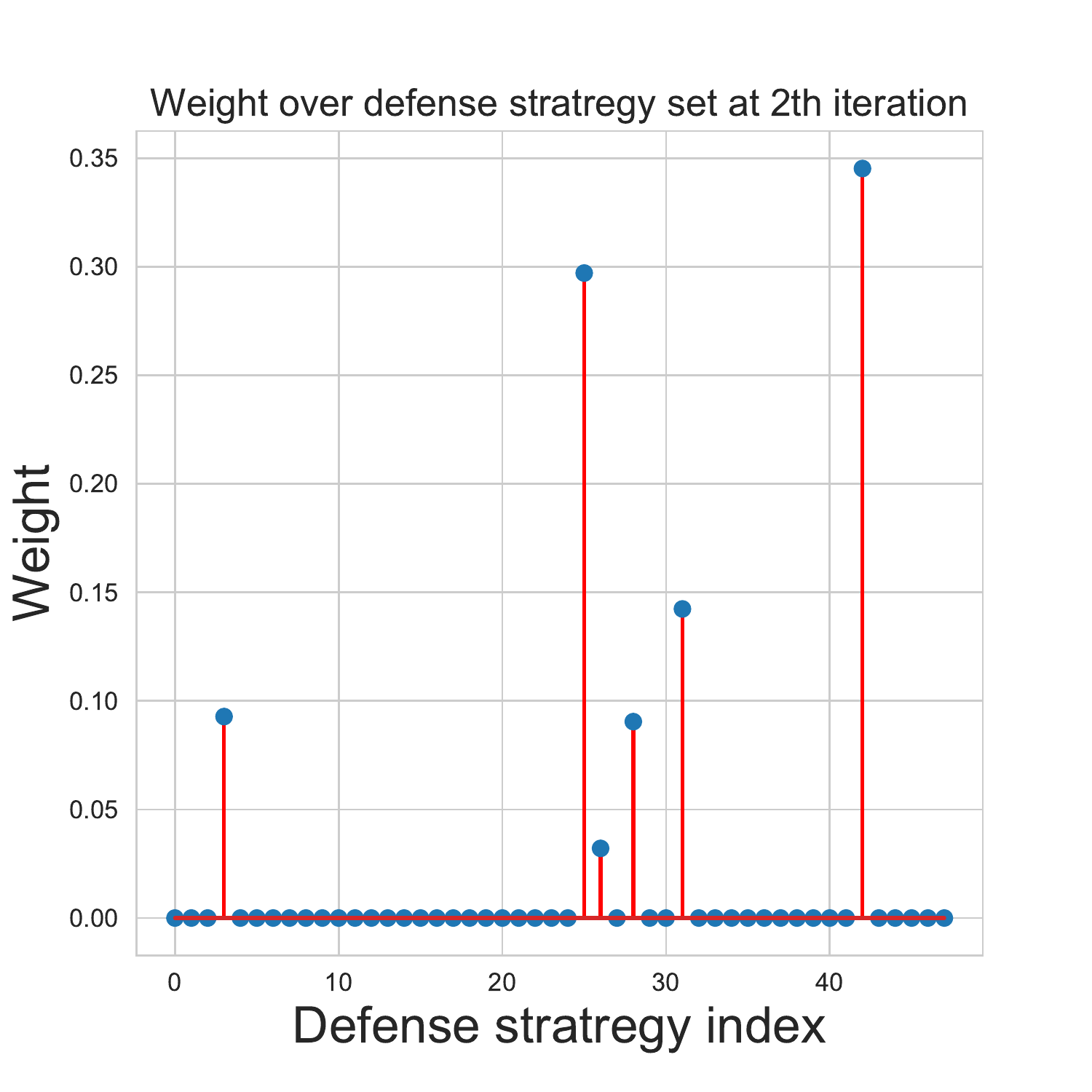}}\hfil
		\subfloat[][ Weight over strategy set at  5th iteration ]{\includegraphics[width=4.6cm]{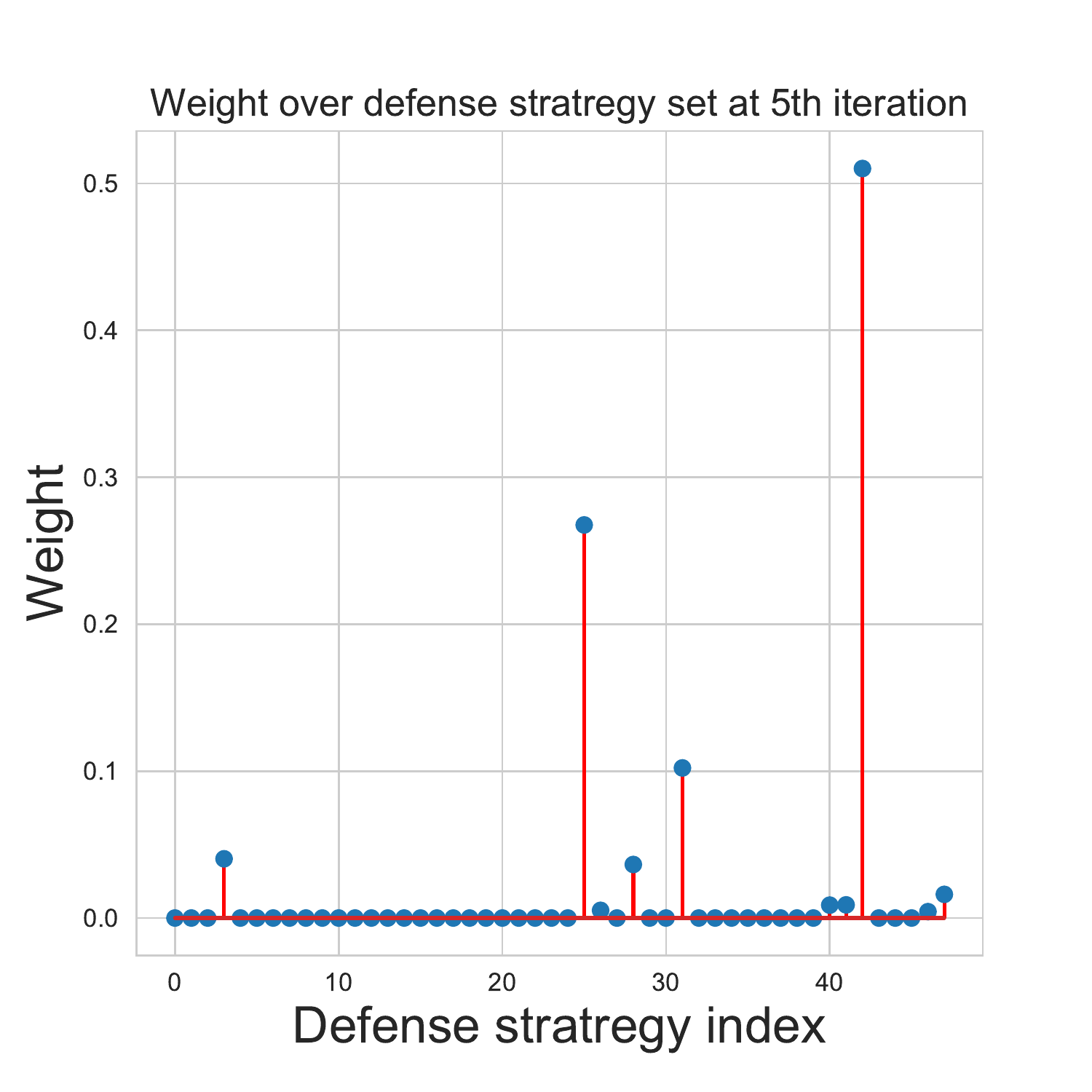}}\hfil  
	\subfloat[][ Weight over strategy set at 10th iteration ]{\includegraphics[width=4.6cm]{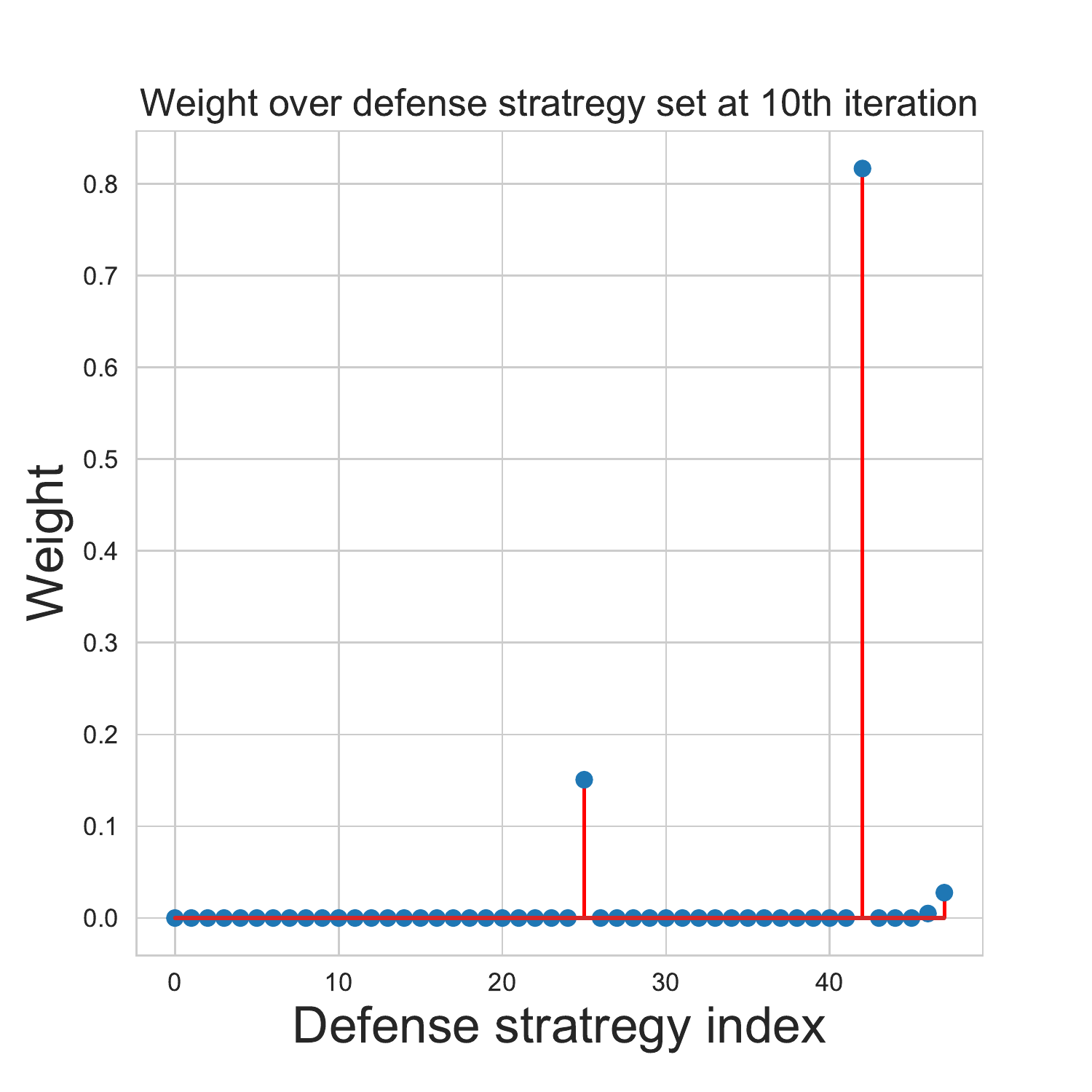}}\hfil
		\subfloat[][ Weight over strategy set at  20th iteration]{\includegraphics[width=4.6cm]{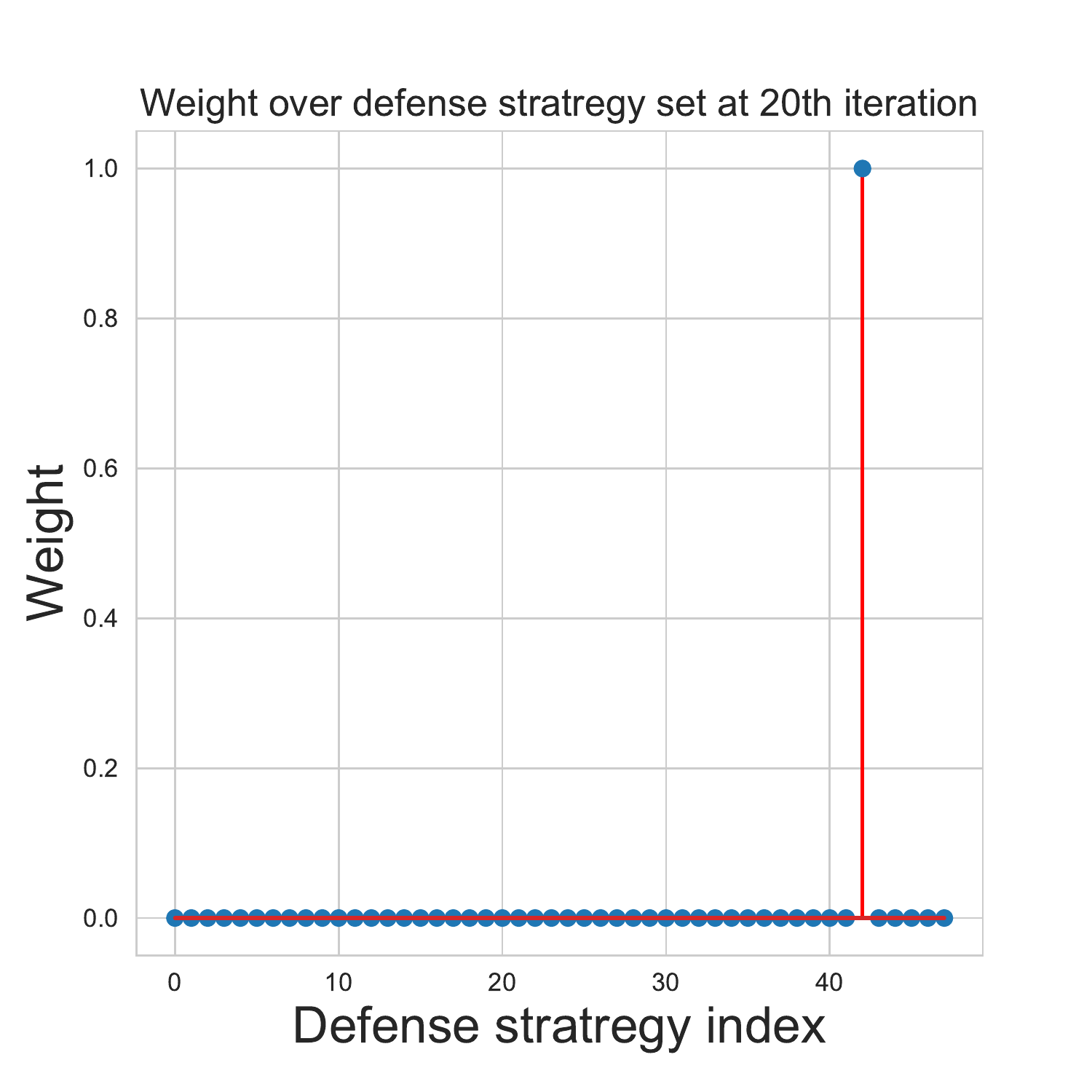}}
	\caption{
	The changing process of $\bm{\lambda}$ against iteration.
	\label{selection_process}}
\end{figure*}

\unappendix
\newpage
\balance
\bibliographystyle{IEEEbib}
\bibliography{here, privacy}

\end{document}